\crefname{figure}{Figure}{Figures}
\crefname{algorithm}{Algorithm}{Algorithms}
\newcommand{\highlight}[1]{\textcolor{red}{#1}}
\newcommand{\defproblemobj}[3]
{
	\begin{tcolorbox}[colback=gray!5!white,colframe=gray!75!black]
		\vspace{-1.5mm}
			\begin{tabular*}{\textwidth}{@{\extracolsep{\fill}}lr} #1   \\ \end{tabular*}
			{\bf{Input:}} #2  \\
			{\bf{Objective:}} #3
		\end{tcolorbox}
}
\newtheoremstyle{mystyle1}
{\topsep}
{\topsep}
{\upshape}
{}
{\bfseries}
{.}
{ }
{\thmname{#1}\thmnumber{ #2}\thmnote{ (#3)}}%
\newtheoremstyle{mystyle2}
{\topsep}
{\topsep}
{\upshape}
{}
{\itshape}
{.}
{ }
{\thmname{#1}\thmnumber{ #2}\thmnote{ (#3)}}%
\theoremstyle{mystyle1}
\newtheorem{theorem}{Theorem}
\newtheorem{lemma}[theorem]{Lemma}
\newtheorem{corollary}[theorem]{Corollary}
\newtheorem{obs}[theorem]{Observation}
\theoremstyle{mystyle2}
\Crefname{algo}{Algorithm}{Algorithms}
\Crefname{ques}{Question}{Questions}
\Crefname{case}{Case}{Cases}
\Crefname{subsection}{Subsection}{Subsections}
\Crefname{obs}{Observation}{Observations}
\Crefname{line}{Line}{Lines}
\newcommand{\N}{\mathbb{N}}
\newcommand{\R}{\mathbb{R}}
\renewcommand{\Im}[1]{\text{Im}\left(#1\right)}
\renewcommand{\Pr}[1]{\textup{Pr}\left(#1\right)}
\newcommand{\order}[1]{\mathcal{O}\left(#1\right)}
\newcommand{\NPH}{\textsc{NP-Hard}\xspace}
\newcommand{\argmax}{\textup{\text{argmax}}\xspace}
\newcommand{\p}[3][]{p^{(#3)}_{#2}#1}
\newcommand{\bfalpha}{\boldsymbol{\alpha}}
\newcommand{\bfbeta}{\boldsymbol{\beta}}
\newcommand{\bfc}{\boldsymbol{c}}
\newcommand{\ImpSea}{\textup{\textsc{Imperfect Searcher}}\xspace}
\newcommand{\OneDImpSea}{\textup{\textsc{1D ImpSea}}\xspace}
\newcommand{\OrdImpSea}{\textup{\textsc{Ordered ImpSea}}\xspace}
\newcommand{\OP}{\textup{\textsc{Orienteering}}\xspace}
\newcommand{\TSP}{\textup{\textsc{Travelling Salesman}}\xspace}
\newcommand{\UKP}{\textup{\textsc{Unbounded Knapsack}}\xspace}
\newcommand{\KP}{\textup{\textsc{Knapsack}}\xspace}
\newcommand{\HP}{\textup{\textsc{Hamiltonian Path}}\xspace}
\newcommand{\TwoDImpSea}{\textup{\textsc{Uniform ImpSea}}\xspace}
\newcommand{\GenImpSea}{\textup{\textsc{General Imperfect Searcher}}\xspace}
\newcommand{\ImpSeaInstance}{(V,P_X,\bfalpha,\bfbeta,\bfc,T)}
\newcommand{\OneDImpSeaInstance}{(V,P_X,\bfbeta,\bfc,T)}
\newcommand{\TwoDImpSeaInstance}{(V,\beta,c,T)}
\newcommand{\KPinstance}{(n, p, w, B)}
\newcommand{\OrdImpSeaInstance}{(V,P_X,\bfbeta,\bfc,T)}
\newcommand{\GenImpSeaInstance}{(V,P_X,\bfbeta,\bfc,T)} 
\title{\LARGE \bf
Provable Methods for Searching with an Imperfect Sensor 
}
\author{Nilanjan Chakraborty$^1$, {Prahlad Narasimhan} Kasthurirangan$^{1,2}$, Joseph S.B. Mitchell$^1$, \\ Linh Nguyen$^1$, and Michael Perk$^3$ 
\thanks{$^1$Stony Brook University, New York, USA}
\thanks{$^2$Email: \href{mailto:prahladnarasim.kasthurirangan@stonybrook.edu}{prahladnarasim.kasthurirangan@stonybrook.edu}}
\thanks{$^3$TU Braunschweig, Lower Saxony, DE}}
\begin{document}

\maketitle

\begin{abstract}
Assume that a target is known to be present at an unknown point among a finite set of locations in the plane. We search for it using a mobile robot that has imperfect sensing capabilities. It takes time for the robot to move between locations and search a location; we have a total time budget within which to conduct the search. We study the problem of computing a search path/strategy for the robot that maximizes the probability of detection of the target. 
Considering non-uniform travel times between points (e.g., based on the distance between them) is crucial for search and rescue applications; such problems have been investigated to a limited extent due to their inherent complexity. In this paper, we describe fast algorithms with performance guarantees for this search problem and some variants, complement them with complexity results, and perform experiments to observe their performance. 
\end{abstract}

\section{Introduction}\label{intro}

A fundamental problem of interest in search and rescue (SAR) is the following: given a mobile robot with imperfect sensing capabilities and a time (or fuel) budget, execute a search of a set of points to find a stationary target. We refer to this problem as \ImpSea (defined formally in \Cref{problem formulation}). With the significant growth in the development and availability of robotic hardware platforms for their use in SAR operations \cite{murphy_search_2008}, it is important to have a thorough theoretical understanding of the relationship between the geometric structure of search tasks and the perceptual capabilities and uncertainties of the robot. Specifically, it is critical to design fast algorithms with performance guarantees that can be translated into practice for search problems set up with realistic modeling assumptions.

Solving (different variants of) \ImpSea has three main components (illustrated in \Cref{flowchart}): we must decide on (I) the subset of points to visit; (II) the order in which to visit them; and (III) how to allocate search effort to these points (visited in this order). Each of these steps have implicit complexity -- Step I has flavors of \OP \cite{vansteenwegen_orienteering_2019}; Step II faces $\Omega(n!)$ orderings; and Step III is similar to \UKP \cite{kellerer_knapsack_2004}. Addressing \ImpSea through these steps naturally generates different variants of the problem, which are of independent interest.

\begin{figure}[ht]
    \centering
    \includegraphics[width=\linewidth]{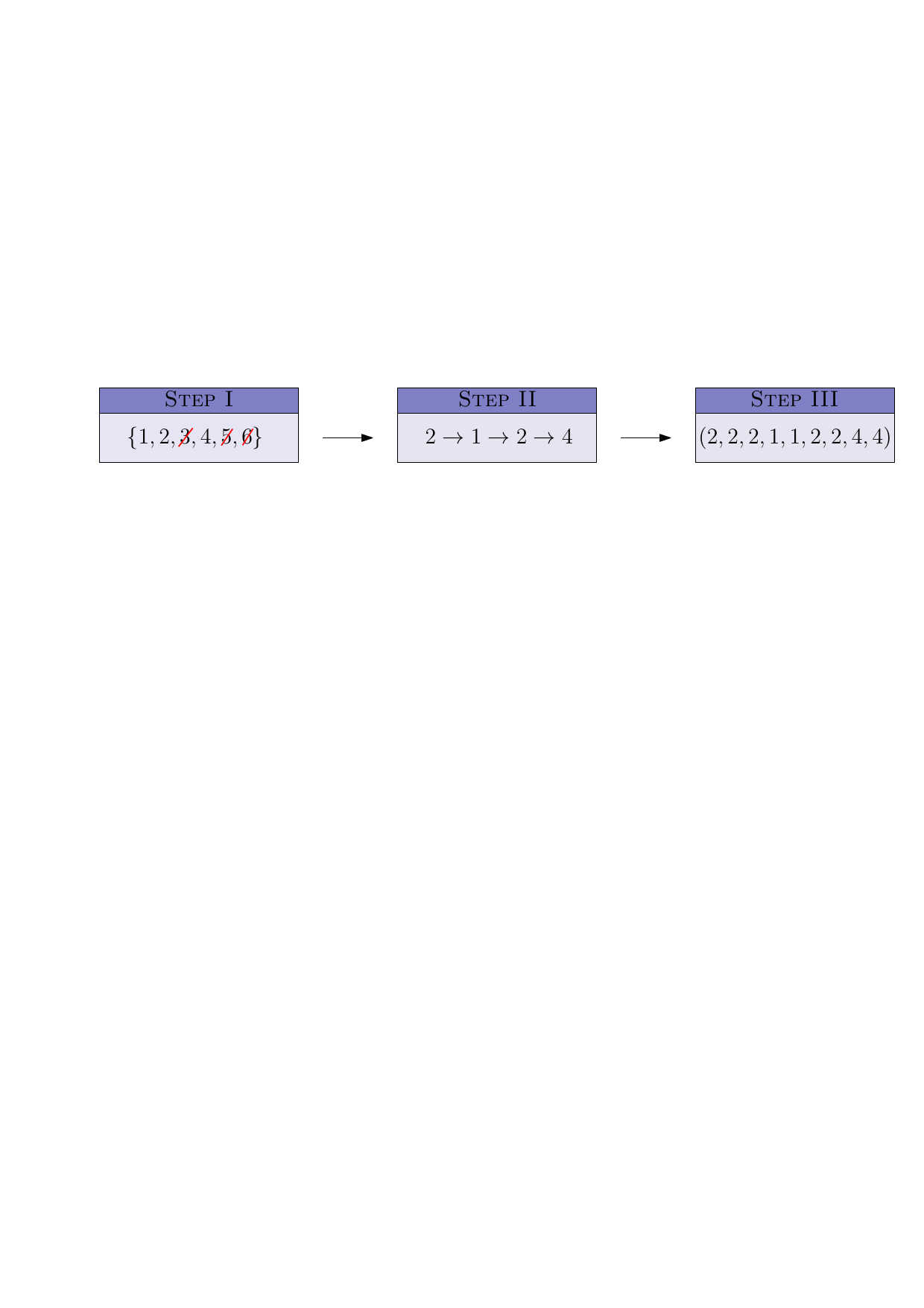}
    \caption{A three-step approach to solve \ImpSea. Say $n = 6$. First, we decide on which on subset to visit; next the order in which to visit them; and finally the number of times to search a point \textit{during} each visit.}
    \label{flowchart}
\end{figure}

The problems of target search (especially those that address one of these three challenges) have been extensively studied by the robotics, operations research, and computational geometry communities. Designing paths with differing notions of optimality is the subject of study of (the many variants of) \textsc{Orienteering} and \TSP: see recent surveys and books on them \cite{gunawan_orienteering_2016,vansteenwegen_orienteering_2019,m-gspno-00,applegate_traveling_2006,crc-2016,saller_systematic_2024}. On the other hand, there is a significant body of work in search theory, 
where optimally allocating search effort has been the study of some of the seminal papers in this field \cite{koopman_theory_1956,koopman_theory_1956-1,koopman_theory_1957,jr_sequential_1967,kadane_discrete_1968}; see surveys and books \cite{stone_theory_1976,benkoski_survey_1991,hohzaki_search_2016} for more recent work. We discuss closely related work in \Cref{related work}.

\begin{figure}[ht]
    \centering
    \includegraphics[width=\linewidth]{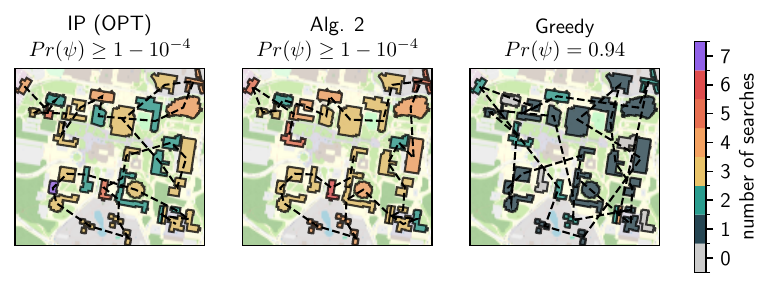}
    \caption{Comparison of 3 algorithms on Stony Brook University buildings. We use Cartopy \cite{Cartopy} and OpenStreetMap \cite{OpenStreetMap} to obtain satellite data. Search costs are proportional to a building's footprint; time budget is a factor of the map's diameter. How do we find a target on one of these rooftops using an aerial vehicle with imperfect sensing capabilities?}
    \label{example of solvers with OSM}
\end{figure}

\textbf{Main Contributions.} We tackle \ImpSea by breaking it into three steps as explained above. Step II is the easiest to overcome: in \Cref{no revisits}, we prove that given a subset of points $V'$ to visit, the optimal schedule visits them in the order given by \textit{any} optimal TSP path on $V'$. Next, in \Cref{1D problem}, we use dynamic programming (DP) to design a psuedopolynomial algorithm to solve Step III -- indeed, we show that this reduces to solving \ImpSea on a line. We complement this by proving our problem, even in 1D, is \NPH. This, therefore, leaves Step I: how do we choose an optimal subset of points to visit? In \Cref{General problem}, we show that if we have uniform priors, then, we can use $k$-TSP paths to compute this subset. We design, in \Cref{Ordered problem}, a DP to compute the optimal subset to visit (and simultaneously the optimal search allocation) when an input points $\{v_1, v_2 \dots v_n\}$ must be searched in the given order. We use this DP to design a heuristic for the general problem in \Cref{Simulation Results}, where we compare it to an (exact) integer linear program and a greedy heuristic (see \Cref{example of solvers with OSM}). 

\section{Problem Formulation}\label{problem formulation}
Consider a finite set, $V= \{v_1, v_2 \dots v_n\} \subset \R^2$, of $n$ points in the plane. A random variable $X\in\{1,2,\ldots,n\}$ denotes the position of a single, stationary (immobile) \textit{target} at $v_X\in V$. Let $p_i = \Pr{X = i}$ be the prior probability (belief estimate) that our target is at $v_i$ and let $P_X = (p_i)^n_{i=1}$ be the discrete probability mass function vector. We assume that the target indeed exists within $V$, so that $\sum^n_{i=1}p_i = 1$; this assumption can be relaxed by adding a point $v_0$ to $V$ that designates the non-presence of the target (think of $v_0$ as a point ``far away"). Note that the probability that the target is at $v_i$ changes over time as we execute a search; it is only initially given by the prior distribution $P_X$.

Our goal is to design an ``optimal" search plan (we define our notion of optimality shortly) to find the target within a given time budget $T$. Formally, an \textit{$s$-step schedule} $\psi$ is a function -- $\psi \colon \{0,1 \dots s\} \mapsto \{1,2 \dots n\}$ ($\psi(0)$ simply denotes the point at which the robot begins the search -- in some applications, this might be constrained due to the presence of a depot; this is easy to handle). A \textit{schedule} is an $s$-step schedule for some $s \in \N$. For a schedule $\psi$, let $|\psi|$ denote the number of steps that $\psi$ has. Each step of a schedule consists of either a movement step, during which the searcher moves between two points of $V$, or a search step, during which the searcher conducts a search at its current location point. It is sometimes more convenient to think of a schedule as a (finite) sequence instead; we switch between these two notions as required. We assume that the searcher moves between points at unit speed; thus, the time required for the searcher to make a movement step from $v_i$ to $v_j$ is $d(v_i,v_j)$, the Euclidean distance between the points. The time required to execute the search of $v_i$ is $c_i$, a natural number (real numbers are easy to handle too). Let $\bfc = (c_i)^n_{i=1}$. The \textit{weight} of step $t > 0$ of $\psi$, denoted by $w_t(\psi)$, is the time taken to perform that step:
\[
    w_t(\psi) = \begin{cases}
        c_{\psi(t)} & \text{ if }\psi(t-1) = \psi(t)\\
        d(v_{\psi(t-1)}, v_{\psi(t)}) & \text{ if }\psi(t-1) \neq \psi(t)
    \end{cases}
\]
The \textit{weight} of $\psi$, denoted by $w(\psi)$, is $\sum^{|\psi|}_{t=1}w_t(\psi)$. This is the total time required to execute $\psi$. We are, therefore, precisely looking for a schedule $\psi$ with $w(\psi) \leq T$.

Our searcher is imperfect -- specifically, if the target \textit{\textbf{is present}} at $v_i$, it reports that the target is \textit{\textbf{not present}} at $v_i$ while searching it with probability $\beta_i$, the false-negative probability. Similarly, $\alpha_i$ denotes the false-positive probability. 
Let $\bfalpha = (\alpha_i)^n_{i=1}$, $\bfbeta = (\beta_i)^n_{i=1}$; we call this searcher an $(\bfalpha,\bfbeta)$--\textit{imperfect searcher}. In this paper, we assume that $\alpha_i=0$, for all $i$, and consider a $(\mathbf{0}$,$\bfbeta)$-imperfect searcher.

Let $Y_{\psi(t)}$ be a Bernoulli random variable that indicates whether or not the searcher reports the target to be present: $Y_{\psi(t)}=1$ (resp., $Y_{\psi(t)}=0$) if the searcher reports the target to be present (resp., not to be present) at step $t$ of the schedule $\psi$, having just searched (at step $t$) the point $\psi(t)$. Then,
\begin{align}
    \Pr{Y_{\psi(t)}= 0 \mid X = \psi(t)} &= \beta_{\psi(t)} \label{no | present}\\
    \Pr{Y_{\psi(t)}= 0 \mid X \neq \psi(t)} &= 1 - \alpha_{\psi(t)} = 1\label{no | not present}\\
    \Pr{Y_{\psi(t)}= 1 \mid X = \psi(t)} &= 1 -\beta_{\psi(t)}& \label{yes | present}\\
    \Pr{Y_{\psi(t)}= 1 \mid X \neq \psi(t)} &= \alpha_{\psi(t)}= 0\label{yes | not present}
\end{align}
There are two fundamental measures of optimality in search problems: minimizing the expected time to detection (ETTD) and maximizing the probability of detection \cite{stone_theory_1976}; we restrict ourselves to the latter in this paper. We use $\Pr{\psi}$ to denote the probability that the target is detected when using the schedule $\psi$ (see \Cref{no revisits} for a rigorous definition). Formally, our problem of interest is:
\defproblemobj{\ImpSea}{A finite set of points $V \subset \R^2$, a target random variable $X$ with a (prior) probability mass function $P_X$ on $V$, a search cost vector $\bfc$, an $(\mathbf{0},\bfbeta)$--imperfect searcher, and a time budget $T$.}{Find a schedule $\psi$ that maximizes $\Pr{\psi}$ subject to $w(\psi) \leq T$.}

\section{Related Work}\label{related work}
Searching for lost targets has been the subject of study for over 70 years; we refer the reader to books and surveys \cite{stone_theory_1976,benkoski_survey_1991,hohzaki_search_2016} cited in \Cref{intro} for a general overview of the field. In this section, we review closely related work and discuss how our results complement existing literature. The problems we discuss in this paper have three main features:
\begin{enumerate}[(i)]
    \item The search space consists of a finite number of points in the Euclidean plane. There is a time or fuel penalty (equal to the distance between the points) for switching between searching different points. This is sometimes referred to as \textit{non-uniform switching costs} in prior literature. In the problems discussed in \Cref{1D problem,Ordered problem}, we also have constraints on potential search paths that the searcher can employ.
    \item There is a (possibly non-uniform) cost to search each point and we are allowed to visit a point without searching it.  
    \item Our searcher is imperfect: there is a non-zero probability that the searcher reports \textsc{no} on searching a point that contains the target. 
\end{enumerate}

\subsection{Closely Related Work}
\ImpSea is clearly a well-motivated fundamental problem in optimal search theory; it is no surprise that it has been the studied for over four decades. To the best of our knowledge, \ImpSea was first described by Lössner and Wegener in 1982 \cite{lossner_discrete_1982}. They devise necessary and sufficient conditions for the existence of an optimal schedule (to minimize ETTD) of a specific type when searching points on the plane using an imperfect searcher. Unfortunately, their (exact) algorithm is impractical since its runs in doubly exponential time (as they explain in the final paragraph of Section 4 in their paper). This was somewhat justified in a later paper written by one of the authors \cite{wegener_optimal_1985}, which shows that this problem is \NPH (even with constant overlook probabilities and search costs). Around the same time, Trummel and Weisinger \cite{trummel_complexity_1986} showed that computing a schedule of a searcher that maximizes the probability of detection is \NPH in (even unweighted) graphs using a reduction from \HP. 

The search for fast practical algorithms for (various versions of) \ImpSea persists with the advancement of various heuristics \cite{chung_decision-making_2007,chung_multi-agent_2008,kress_optimal_2008,chung_analysis_2012,berger_exact_2013,yu_bayesian-based_2020,cheng_scheduling_2021}. Apart from developing these heuristics, these papers also provide a robust mathematical formulation of the problem; indeed we borrow most of our notation from them, especially \cite{chung_analysis_2012}. Crucially, these formulations discretise the search environment into a grid and restrict searcher motion to adjacent cells (with unit switch cost). This method is useful when the search region is continuous (for example, a lost hiker in the woods). However, it induces enormous computational overhead when the search region is discontinuous (for example, searching for those stranded on rooftops during natural disasters) since they depend on the \textit{diameter} of $V$ (the maximum distance between any two points in $V$). We overcome this challenge by bypassing the use of grids altogether and using the underlying geometry instead.

\subsection{Related Work without Feature (i)}

Initial papers on search theory considered problems with no constraints on the search path and no switch costs \cite{koopman_theory_1956,koopman_theory_1956-1,koopman_theory_1957,jr_sequential_1967,kadane_discrete_1968}. Heuristics for problems with non-uniform search costs (with both errors) and no switch costs have been studied in \cite{kriheli_optimal_2013,cheng_scheduling_2021}. Recently, machine learning techniques have also been employed for problems of this type \cite{laperriere-robillard_supervised_2022}. In a paper that discussed a search problem with multiple searchers and no switch costs \cite{song_discrete_2004}, the authors state (in Section 4 of their paper): \textit{``The inclusion of switching delays drastically changes the nature of search problems. The resulting problems are considerably more difficult than the corresponding ones without switching delays"}. 

\subsection{Related Work without Feature (iii)} \label{perfect searcher lit review}
If our searcher is indeed perfect (i.e., $\beta_i = 0$ for all $i$), then \ImpSea reduces to the well known \textsc{Travelling Repairman} (and more generally \textsc{Graph Search}) \cite{koutsoupias_searching_1996,ausiello_salesmen_2000,lau_optimal_2005,van_ee_approximation_2018} when the objective is to minimize ETTD. When we look to maximize the detection probability, it reduces to \OP \cite{golden_orienteering_1987,gunawan_orienteering_2016,vansteenwegen_orienteering_2019}. However, there are no known formulations of \textsc{Orienteering} that accurately model search problems with imperfect searchers (with the latter objective): see Section II B of \cite{mohamed_person_2020} for a review of its relevant variants in the context of search theory.

\ImpSea, therefore, is an important problem to strengthen the algorithmic foundations of: not only because of its relevance in practical SAR operations, but also due to its relation to several other fundamental problems in robotics and operations research. 

\section{Optimal Order of Visits}\label{no revisits}
In this section, we address Step II in our three-step approach to solve \ImpSea: given a subset $V' \subseteq V$ to visit, what is the optimal order to visit them in? We show that this \textit{only} depends on $V'$ (and not also on $P_X$ and $\bfbeta$) -- this is not immediate since $\Pr{\psi}$ clearly also depends on these variables. Our main result (\Cref{optimal no revisit}) in this section is the following: there is an optimal schedule that does not revisit any point (of course, it can search a point several times during a visit). First, we have a simple claim.

\begin{lemma}\label{probability order does not matter main thm}
    $\Pr{\psi} = \sum^n_{i=1}\left((1 - \beta^{b_i}_i) p_i\right)$ for a schedule $\psi$.
\end{lemma}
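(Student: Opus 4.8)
The plan is to compute $\Pr{\psi}$ by conditioning on the true location of the target and then exploiting the fact that a $(\mathbf{0},\bfbeta)$-imperfect searcher never raises a false alarm. First I would fix the target by conditioning on the event $X = i$. Since the searcher reports ``present'' at any point $v_j \neq v_i$ only with probability $\alpha_j = 0$ (this is exactly \eqref{yes | not present}), no search of a point other than $v_i$ can ever trigger a detection. Hence, conditioned on $X = i$, the target is detected if and only if at least one of the searches carried out at $v_i$ reports ``present.''

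Next I would count the relevant searches: let $b_i$ denote the number of search steps of $\psi$ that occur at $v_i$. Treating the outcomes of distinct search steps as independent Bernoulli trials (the standard independent-glimpse assumption, consistent with \eqref{no | present}), each search of $v_i$ reports ``not present'' with probability $\beta_i$. The probability that \emph{all} $b_i$ of these searches miss the target is therefore $\beta_i^{b_i}$, so the conditional detection probability is
\[
    \Pr{\text{detect} \mid X = i} = 1 - \beta_i^{b_i}.
\]
Observe that this formula depends only on the \emph{count} $b_i$ and not on the order in which the searches (or the interleaved moves) are performed, and that it correctly handles unvisited points: if $v_i$ is never searched then $b_i = 0$ and the conditional detection probability is $1 - \beta_i^0 = 0$.

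Finally, I would apply the law of total probability over the $n$ possible target locations, weighting each conditional detection probability by its prior $p_i = \Pr{X = i}$:
\[
    \Pr{\psi} = \sum_{i=1}^n \Pr{\text{detect} \mid X = i}\,\Pr{X = i} = \sum_{i=1}^n \left(1 - \beta_i^{b_i}\right) p_i,
\]
which is the claimed identity.

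The main point to get right here is not a calculation but a modeling justification. One must argue that repeated searches at the same point have independent outcomes, so that the miss probabilities multiply to $\beta_i^{b_i}$, and that searches at other points are irrelevant to detecting a target at $v_i$. The latter is precisely what the no-false-positive hypothesis $\bfalpha = \mathbf{0}$ provides: without it, a ``present'' report at some $v_j \neq v_i$ would not constitute a true detection, and the clean product form could break down. I expect the independence-of-glimpses assumption to be the only genuinely load-bearing ingredient, and it should be invoked (or stated as part of the model) explicitly.
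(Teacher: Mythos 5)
Your proof is correct and follows essentially the same route as the paper's: both condition on the target's location, use the zero false-positive assumption (Equation (4)) to discard searches at other points, and use independence of repeated looks together with Equation (1) to obtain the miss probability $\beta_i^{b_i}$. Your explicit flagging of the independence-of-glimpses assumption is a point the paper leaves implicit, but the argument is the same.
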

\begin{proof}
	We have,
	\begin{align*}
		\Pr{\psi} &= \sum^n_{i=1} \Pr{\psi \text{ sees target at }v_i}\\
		& = \sum^n_{i=1} \Pr{\psi \text{ sees target at }v_i \mid X = i} \cdot p_i\\
		&\qquad\quad+ \Pr{\psi \text{ sees target at }v_i \mid X \neq i} \cdot (1 - p_i)\\
		&= \sum^n_{i=1} \left((1 - \Pr{\psi \text{ misses target at }v_i \mid X = i}) \cdot p_i\right)\\
		&\qquad\quad + 0\\
		&=\sum^n_{i=1}\left((1 - \beta^{b_i}_i)\cdot p_i\right)
	\end{align*}
	Note that the second term in the third equality is 0 by \highlight{Equation (4)} while the last equality follows from \highlight{Equation (1)}.
\end{proof}
Note, therefore, that the order in which a schedule visits points has no effect on the probability that it finds the target -- only the number of times it searches a given point (and fails to detect the target) does \cite{song_discrete_2004}. We prove a small observation before the main theorem of this section.

\begin{lemma}\label{lower weight for no revisits}
    Let $\psi$ be a schedule, and let $s_i$ be the number of times that $\psi$ searches $v_i \in V$. Order $\Im{\psi}$ as the nodes appear in the schedule -- say $\Im{\psi} = \{i_1, i_2 \dots i_{s'}\}$. Consider the schedule $\psi'$ as follows: search $v_{i_1}$ $s_{i_1}$ times; then, move to $v_{i_2}$ and search it $s_{i_2}$ times and so on until $v_{i_{s'}}$. Then, $w(\psi') \leq w(\psi)$. 
\end{lemma}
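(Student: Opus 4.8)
The plan is to split the weight of any schedule into its total search cost and its total movement cost, and then to argue that $\psi$ and $\psi'$ incur identical search costs while $\psi'$ cannot incur more movement cost than $\psi$. First I would observe that $\psi'$ searches each $v_i$ exactly $s_i$ times, which is the total number of searches that $\psi$ devotes to $v_i$; hence the contribution of search steps to $w(\psi')$ and to $w(\psi)$ coincides, namely $\sum_i s_i c_i$. It therefore suffices to compare only the movement costs of the two schedules.

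For the movement costs, I would decompose the timeline of $\psi$ according to the first-appearance times of the distinct points. Let $t_k$ denote the first step at which $\psi$ is located at $v_{i_k}$; by the way $\Im{\psi} = \{i_1, i_2, \dots, i_{s'}\}$ is ordered we have $t_1 < t_2 < \dots < t_{s'}$. Between step $t_k$ and step $t_{k+1}$ the searcher starts at $v_{i_k}$ and ends at $v_{i_{k+1}}$, so the sum of the movement weights $w_t(\psi)$ over the movement steps $t$ lying in this interval is the length of a walk from $v_{i_k}$ to $v_{i_{k+1}}$. Since $d(\cdot,\cdot)$ is a metric, the triangle inequality gives that this length is at least $d(v_{i_k}, v_{i_{k+1}})$.

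Summing over $k = 1, \dots, s'-1$, and discarding the nonnegative movement weight accrued after step $t_{s'}$, the total movement cost of $\psi$ is at least $\sum_{k=1}^{s'-1} d(v_{i_k}, v_{i_{k+1}})$. This last quantity is exactly the movement cost of $\psi'$, which traverses $v_{i_1} \to v_{i_2} \to \dots \to v_{i_{s'}}$ once and never revisits a point. Combining this inequality with the equality of search costs yields $w(\psi') \leq w(\psi)$.

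The one delicate point, I expect, is the bookkeeping of the decomposition: I must argue cleanly that the movement steps of $\psi$ partition into the $s'-1$ inter-appearance intervals together with a tail after $t_{s'}$, and that each interval genuinely constitutes a walk from $v_{i_k}$ to $v_{i_{k+1}}$ (the search steps inside an interval contribute zero to the movement cost and leave the endpoints unchanged). Everything beyond this accounting is an immediate consequence of the triangle inequality.
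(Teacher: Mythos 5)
Your proposal is correct and follows essentially the same route as the paper's proof: both split the weight into search cost (identical for $\psi$ and $\psi'$, equal to $\sum_i s_i c_i$) and movement cost, and both bound the latter by observing that the portion of $\psi$ between the first visits to $v_{i_k}$ and $v_{i_{k+1}}$ is a walk whose length is at least $d(v_{i_k}, v_{i_{k+1}})$ by the triangle inequality. Your version merely makes the interval decomposition by first-appearance times more explicit than the paper does, which is a harmless (and arguably welcome) refinement.
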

\begin{proof}
    Clearly, the cost for searching nodes (excluding ``travel costs") is the same for both $\psi'$ and $\psi$ -- specifically, this cost is $ \sum^n_{i=1}c_i s_i$. Now, consider an $i_j \in \{i_1, i_2 \dots i_{s'-1}\}$. To get to $v_{i_{j+1}}$ from $v_{i_j}$, $\psi$ walks through a subset of nodes indexed $\{i_1, i_2 \dots i_{j}\}$ (it might search some of them along the way). However, $\psi'$ uses the segment $v_{i_j}v_{i_{j+1}}$. Thus, by \textit{triangle inequality}, $\psi'$ takes a shorter path to $v_{i_{j+1}}$. Since $j$ was arbitrary, $w(\psi') \leq w(\psi)$. 
\end{proof}
Note that our assumption that $V$ is a metric space (specifically a subset of $\R^2$) is crucial for \Cref{lower weight for no revisits}: otherwise, consider a triangle where one of its sides are much larger than the other two. See \Cref{non-metric} for an illustrated example.
\begin{figure}[ht]
	\centering
	\includegraphics[width=0.4\linewidth]{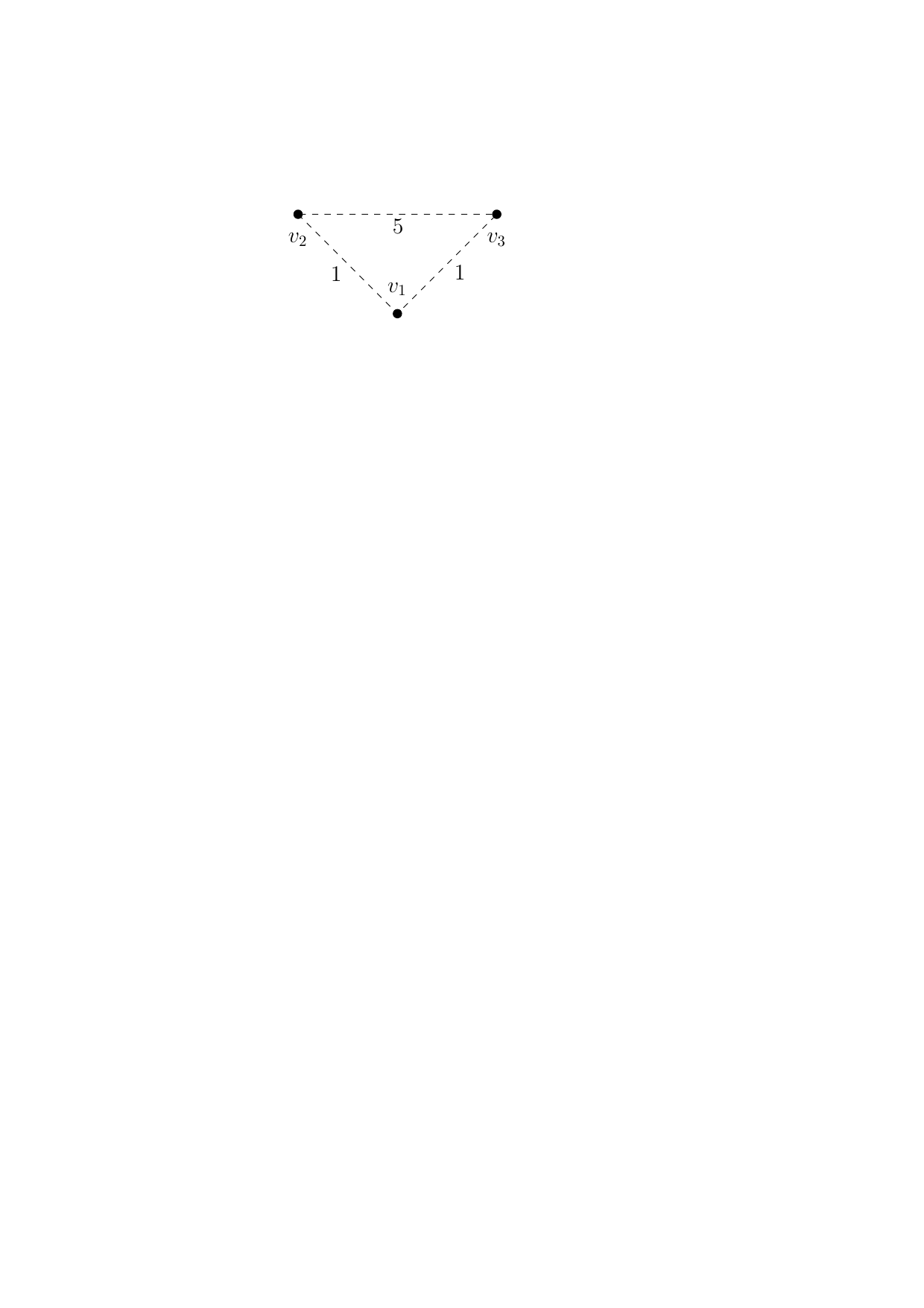}
	\caption{\highlight{Lemma 2} fails if $V$ is not a metric space: schedule $\psi = (1, 1, 2, 1, 3)$ has less weight than $\psi' = (1, 1, 1, 2, 3)$.}
	\label{non-metric}
\end{figure}
\begin{theorem}\label{optimal no revisit}
    Let $\OneDImpSeaInstance$ be an instance of \ImpSea. There exists an optimal schedule $\psi^*$ (say $s = |\psi^*|$) for this instance that does not revisit vertices. That is, for every $v_i \in V$, ${\psi^*}^{-1}(i)$ is an unbroken subsequence of $(0, 1 \dots s)$. 
\end{theorem}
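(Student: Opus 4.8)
The plan is to show that \emph{any} optimal schedule can be converted, without loss of detection probability and without any increase in weight, into one that visits each point in a single contiguous block. The two lemmas already established do essentially all the work; the remaining task is to compose them correctly and check the bookkeeping.

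First I would fix an arbitrary optimal schedule $\psi$ for the instance $\OneDImpSeaInstance$, and for each $v_i \in V$ let $s_i$ be the number of times $\psi$ searches $v_i$. I then apply the construction of \Cref{lower weight for no revisits} verbatim to $\psi$, obtaining the schedule $\psi'$ that searches the points of $\Im{\psi}$ in the order $i_1, i_2, \dots, i_{s'}$ in which they first appear in $\psi$, performing all $s_{i_j}$ searches of $v_{i_j}$ consecutively before moving on to $v_{i_{j+1}}$. By \Cref{lower weight for no revisits}, $w(\psi') \le w(\psi) \le T$, so $\psi'$ is feasible.

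Next I would argue that $\psi'$ remains optimal. The key point is that the construction preserves the per-point search counts: $\psi'$ searches each $v_i$ exactly $s_i$ times, the same as $\psi$. By \Cref{probability order does not matter main thm}, the detection probability depends only on these counts (it equals $\sum_{i=1}^{n}\left(1 - \beta_i^{s_i}\right) p_i$ for both schedules), so $\Pr{\psi'} = \Pr{\psi}$. Since $\psi$ was optimal and $\psi'$ is feasible with the same detection probability, $\psi'$ is optimal as well.

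Finally I would read off the no-revisit property directly from the construction: since $\psi'$ processes $i_1, i_2, \dots, i_{s'}$ monotonically and never returns to a point once it leaves, each preimage ${\psi'}^{-1}(i)$ is a single unbroken block of indices (and is empty, hence trivially contiguous, for $v_i \notin \Im{\psi}$). Setting $\psi^* = \psi'$ then completes the argument. The only things needing genuine care are (a) confirming the appearance order begins with $i_1 = \psi(0)$, so that $\psi^*$ and $\psi$ share a valid start point, and (b) verifying the transformation truly leaves each $s_i$ unchanged, so that \Cref{probability order does not matter main thm} applies to equate the two probabilities. Neither is difficult: there is no substantive obstacle here beyond this bookkeeping, because \Cref{lower weight for no revisits} and \Cref{probability order does not matter main thm} have already decoupled the travel cost from the visiting order and the detection probability from the order, respectively.
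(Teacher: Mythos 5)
Your proposal is correct and follows essentially the same route as the paper: take an optimal schedule, apply the rearrangement of \Cref{lower weight for no revisits} to obtain a feasible no-revisit schedule of no greater weight, and invoke \Cref{probability order does not matter main thm} to see the detection probability is unchanged since the per-point search counts are preserved. The extra bookkeeping you flag (the start point and the invariance of the $s_i$) is harmless and does not change the argument.
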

\begin{proof}
    Say $\psi$ is an optimal schedule for this instance of \ImpSea. Note, by definition, that $w(\psi) \leq T$. Let $\psi^*$ be the schedule constructed from $\psi$ as in \Cref{lower weight for no revisits}. Then, $w(\psi^*) \leq w(\psi) \leq T$ (by \Cref{lower weight for no revisits}) and $\Pr{\psi^*} = \Pr{\psi}$ (from \Cref{probability order does not matter main thm}). Thus, $\psi^*$ (which has the property of interest) is also an optimal schedule for $\ImpSeaInstance$. 
\end{proof}

\Cref{optimal no revisit} immediately also solves Step II: it shows that the optimal order to visit a subset $V' \subseteq V$ is to minimize the costs taken to visit all points in $V'$ -- i.e., the order of any optimal TSP path on $V'$.  

\section{Optimal Search Effort Allocation} \label{1D problem}

We now move on to Step III: given a subset of input points to visit and the order to visit them, how do we optimally allocate search effort during a visit to these points? Equivalently, we are given a (parameterized, simple, integrable) curve on which our points of interest lie and we must efficiently search it while moving along this curve. Such constrains are natural in real-world SAR applications too -- see \Cref{southampton}. Searching along a curve immediately maps to searching in 1D (since line integrals are additive); we therefore use \OneDImpSea to refer to this version of \ImpSea.

\defproblemobj{\OneDImpSea}{A finite set of points $V \subset \R$, a target random variable $X$ with a probability mass function $P_X$ on $V$, a search cost vector $\bfc$, an $(\mathbf{0},\bfbeta)$--imperfect searcher, and a time budget $T$.}{Find a schedule $\psi$ that maximizes $\Pr{\psi}$ subject to $w(\psi) \leq T$.}

We use DP to design a pseudopolynomial algorithm for \OneDImpSea (\Cref{DP for 1D ImpSea}) and complement it by showing that it is (weakly) \NPH (\Cref{1D ImpSea is Hard}). Let $\OneDImpSeaInstance$ denote an instance of this problem. We first prove that an optimal schedule only ``moves forward".
\begin{lemma}\label{no backwards}
    There is an optimal schedule $\psi^*$ for an instance $\OneDImpSeaInstance$ of \OneDImpSea such that for all $1\leq t_1 \leq t_2 \leq |\psi^*|$, $\psi^*(t_1) \leq \psi^*(t_2)$.
\end{lemma}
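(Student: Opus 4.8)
The plan is to reduce to the no-revisit structure already established and then exploit the total order on $V \subset \R$. First I would take any optimal schedule $\psi$ for the instance $\OneDImpSeaInstance$ and, invoking \Cref{optimal no revisit}, assume without loss of generality that $\psi$ does not revisit vertices; let $V' = \Im{\psi}$ be the set of points it searches and let $s_i$ be the number of times it searches $v_i$. By \Cref{probability order does not matter main thm}, $\Pr{\psi}$ depends only on the search counts $(s_i)$ and not on the order in which the searches occur, so any schedule that searches each $v_i \in V'$ exactly $s_i$ times has the same detection probability as $\psi$.

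Next I would explicitly construct the candidate forward-only schedule $\psi^*$: relabel the points of $V'$ in increasing coordinate order as $u_1 < u_2 < \dots < u_k$, start the searcher at $u_1$, search it $s_{u_1}$ times, move right to $u_2$, search it $s_{u_2}$ times, and so on, ending at $u_k$. By construction $\psi^*$ is monotone nondecreasing in position, i.e., $\psi^*(t_1) \leq \psi^*(t_2)$ whenever $t_1 \leq t_2$, which is exactly the claimed property. Its search cost equals $\sum_{v_i \in V'} c_i s_i$, identical to that of $\psi$, and its travel cost is the telescoping sum $\sum_{j=1}^{k-1}(u_{j+1} - u_j) = u_k - u_1$, the span of $V'$.

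The crux is the weight comparison $w(\psi^*) \leq w(\psi) \leq T$. Since the search costs agree, it suffices to show that the travel cost of $\psi$ is at least the span $u_k - u_1$. This is the one-dimensional specialization of the remark following \Cref{optimal no revisit} (that the optimal visiting order is a minimum TSP path on $V'$): any walk that visits both the leftmost point $u_1$ and the rightmost point $u_k$ must, between the step at which it is at $u_1$ and the step at which it is at $u_k$, traverse a distance of at least $|u_k - u_1|$, so its total travel is at least the span, a bound that $\psi^*$ attains exactly. I expect this ``optimal TSP path in 1D is the monotone sweep'' observation to be the only real content; everything else is bookkeeping. Combining, $w(\psi^*) \leq w(\psi) \leq T$ while $\Pr{\psi^*} = \Pr{\psi}$, so $\psi^*$ is a forward-only optimal schedule, completing the proof.
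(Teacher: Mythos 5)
Your proposal is correct and follows essentially the same route as the paper: replace the optimal schedule by the monotone left-to-right sweep with the same search counts, note that search costs are unchanged and the probability is preserved by \Cref{probability order does not matter main thm}, and observe that the sweep's travel cost equals the span of the visited points while any schedule reaching both the leftmost and rightmost visited points travels at least that much. The only cosmetic difference is your preliminary invocation of \Cref{optimal no revisit}, which the paper's proof skips because the direct construction already yields the no-revisit, monotone structure.
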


\begin{proof}
    Consider an optimal schedule $\psi$ for this instance of \OneDImpSea. Let $s_i$ denote the number of times that $\psi$ searches $v_i \in V$. Let $v_l$ and $v_r$ be the leftmost and rightmost vertices that $\psi$ visits. Consider the schedule $\psi^*$ that searches each $v_i$ exactly $s_i$ times and visits these vertices in order. Note that the travel costs for $\psi^*$ is exactly $v_r - v_l$ while that for $\psi$ is at least $v_r - v_l$. Since the cost of searching nodes is equal, $w(\psi^*) \leq w(\psi)$. Moreover, as $\Pr{\psi^*} = \Pr{\psi}$ (by \Cref{probability order does not matter main thm}), $\psi^*$ is optimal.
\end{proof}

In other words, \Cref{no backwards} shows that an optimal schedule does not zigzag. We can guess, as our first step, the leftmost and rightmost points of our schedule -- say they are $v_l$ and $v_r$ respectively (run through all $\order{n^2}$ possibilities). This sets up a DP with the flavor of the well known \UKP (see \cite{karp_reducibility_1972, kellerer_knapsack_2004} for an overview of variants of \KP and methods to solve them) where we need to maximize the ``profit" (the probability of detection) constrained by our search budget (which is $T - (v_r - v_l)$). 

\begin{algorithm}[ht]
    \caption{DP for \OneDImpSea given $(l,r)$}
    \label{DP for 1D ImpSea}
    \small
    \KwData{An instance $\OneDImpSeaInstance$ of \OneDImpSea, $l$ and $r$ such that $1 \leq l<r \leq n$}
    \KwResult{The probability of an optimal schedule which starts at $l$ and ends at $r$}
    \tcp{$\tau$ holds the search budget} 
    $\tau \gets \lfloor T - (v_r - v_l) \rfloor$\;
    \tcp{$p[t]$ denotes the optimal probability using budget $t$} 
    $p[t] \gets 0$ for all $0 \leq t \leq \tau$\;
    $i \gets l$\;
    \tcp{A bottom-up construction; in iteration $i$, $p[t]$ considers searching points up to $v_i$}
    \While{$i \leq r$} 
    {
        \tcp{$p'$ holds the probabilities from the previous iteration}
        $p' \gets p$\;
        \tcp{for each $t$, consider searching $v_i$ $j$ times and store the maximum}
        \For{$0 \leq t \leq \tau$}
        {
            $p[t] \gets \max^{\lfloor \frac{t}{c_i} \rfloor}_{j=0} \{p'[t - j \cdot c_i] + (1 - \beta^{j}_i)\cdot p_i\}$ \;\label{DP update 1DImpSea}
        }
        $i \gets i + 1$\;
    }
    \Return{$p[\tau]$}
\end{algorithm}

\begin{theorem}
    \Cref{DP for 1D ImpSea} is correct, runs $\order{nT^2}$-time, and takes $\order{T}$-space.
\end{theorem}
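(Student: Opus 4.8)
The plan is to prove the three claims separately: correctness via a loop invariant on the array $p$, and then the time and space bounds by direct inspection of the nested loops. The key conceptual reduction, which I would state up front, is that by \Cref{probability order does not matter main thm} the detection probability of any schedule is the additive sum $\sum_i (1-\beta_i^{s_i})p_i$, where $s_i$ is the number of searches at $v_i$; and by \Cref{no backwards} (together with \Cref{optimal no revisit}) an optimal schedule constrained to have $v_l$ leftmost and $v_r$ rightmost travels monotonically, so its travel cost is exactly $v_r - v_l$ and every searched point lies in $\{v_l, \ldots, v_r\}$. Hence the residual time available for searching is $T - (v_r-v_l)$, and since every $c_i \in \N$ the total search time $\sum_k s_k c_k$ is a nonnegative integer; the constraint $\sum_k s_k c_k \le T-(v_r-v_l)$ is therefore equivalent to $\sum_k s_k c_k \le \lfloor T-(v_r-v_l)\rfloor = \tau$, which justifies taking the floor and indexing the table by integers $0 \le t \le \tau$.

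For correctness I would establish the invariant: at the start of the while-loop iteration with index $i$, the entry $p[t]$ equals the maximum of $\sum_{k=l}^{i-1}(1-\beta_k^{s_k})p_k$ over all nonnegative integers $s_l,\dots,s_{i-1}$ with $\sum_{k=l}^{i-1}s_k c_k \le t$. The base case ($i=l$, empty sum, value $0$) matches the initialization. For the inductive step, after $p' \gets p$ the array $p'$ records the optimum over $\{v_l,\dots,v_{i-1}\}$; the update on \Cref{DP update 1DImpSea} chooses to search $v_i$ exactly $j$ times (cost $jc_i \le t$, profit $(1-\beta_i^j)p_i$ by additivity) and spends the remaining budget $t-jc_i$ optimally on the earlier points, so maximizing over $0 \le j \le \lfloor t/c_i\rfloor$ yields precisely the optimum over $\{v_l,\dots,v_i\}$. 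When the loop terminates ($i=r+1$) the invariant gives that $p[\tau]$ is the optimal detection probability achievable with search budget $\tau$ over $\{v_l,\dots,v_r\}$, which by the reduction above is exactly the value of the best monotone schedule from $v_l$ to $v_r$.

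For the running time, the while loop executes $r-l+1 = \order{n}$ times; within each iteration the for loop ranges over $\order{T}$ values of $t$, and for a fixed $t$ the max on \Cref{DP update 1DImpSea} examines $\lfloor t/c_i\rfloor + 1 \le t+1 = \order{T}$ candidate values of $j$ (the powers $\beta_i^j$ being maintained incrementally, so each candidate costs $\order{1}$). This gives $\order{T^2}$ per iteration and $\order{nT^2}$ overall. For the space bound, the algorithm keeps only the two length-$(\tau+1)$ arrays $p$ and $p'$, so it uses $\order{\tau} = \order{T}$ space.

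I expect the main obstacle to be the correctness reduction rather than the invariant bookkeeping: one must argue carefully that fixing $(l,r)$ as the extreme endpoints makes the travel cost exactly $v_r-v_l$ and forbids searching outside $[v_l,v_r]$ (so that restricting the DP to indices $l,\dots,r$ loses nothing), and that the nonlinearity of $(1-\beta_i^j)$ in $j$ is what forces the explicit inner maximum over $j$. One cannot use the usual $\order{1}$-per-cell unbounded-knapsack recurrence, which is precisely the source of the extra factor of $T$ in the $\order{nT^2}$ bound.
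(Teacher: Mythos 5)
Your proposal is correct and takes essentially the same approach as the paper, whose own proof is only a two-sentence sketch: it observes that \Cref{DP update 1DImpSea} ``guesses'' the number of times $j$ that $v_i$ is searched and spends the remaining budget on earlier points, cites \Cref{optimal no revisit} for correctness, and declares the complexity ``readily verified.'' Your loop invariant, the floor/integrality justification, and the monotone-travel reduction via \Cref{no backwards} simply make explicit the details the paper leaves implicit.
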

\begin{proof}
    Note that \Cref{DP update 1DImpSea} ``guesses" the number of times, $j$, that we search $v_i$ and uses the remaining budget $t - j \cdot c_i$ to search up to $v_{i-1}$. This indeed produces an optimal schedule by \Cref{optimal no revisit}: the DP considers all schedules that do not revisit vertices. The time and space complexities of the algorithm are readily verified. 
\end{proof}
\begin{corollary}
    \OneDImpSea can be solved in time $\order{n^3 T^2}$ with $\order{T}$ space.
\end{corollary}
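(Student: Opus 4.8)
The plan is to lift the per-pair guarantee of \Cref{DP for 1D ImpSea} to the full problem by enumerating the only two parameters that, by \Cref{no backwards}, pin down an optimal schedule: its leftmost and rightmost searched points. Concretely, \Cref{no backwards} guarantees an optimal schedule $\psi^*$ that never moves backward. Such a schedule is completely described by the index $l$ of its leftmost searched vertex, the index $r \ge l$ of its rightmost searched vertex, and the number of searches allotted to each $v_i$ with $l \le i \le r$; moreover, since it moves monotonically, its travel cost is exactly $v_r - v_l$, so the budget available for searching is exactly $T - (v_r - v_l)$. This is precisely the quantity $\tau$ that \Cref{DP for 1D ImpSea} optimizes against for a fixed $(l,r)$.

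First I would enumerate all pairs $(l,r)$ with $1 \le l < r \le n$ — there are $\order{n^2}$ of them — and run \Cref{DP for 1D ImpSea} on each; I would also handle the $\order{n}$ degenerate single-point schedules ($l = r$) directly, since such a schedule simply searches $v_i$ exactly $\lfloor T/c_i \rfloor$ times for probability $(1 - \beta_i^{\lfloor T/c_i \rfloor})\, p_i$. By the correctness of \Cref{DP for 1D ImpSea}, the value it returns for a given $(l,r)$ is the maximum detection probability over all forward schedules whose leftmost and rightmost searched points are $v_l$ and $v_r$. Since \Cref{no backwards} tells us the global optimum is attained by some forward schedule, and that schedule falls under exactly one of these $(l,r)$ cases, taking the maximum of all returned values yields the optimal value for $\OneDImpSeaInstance$ (the witnessing schedule is recovered by standard DP backtracking).

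For the time bound, each of the $\order{n^2}$ invocations costs $\order{nT^2}$, giving $\order{n^2} \cdot \order{nT^2} = \order{n^3 T^2}$ overall, which dominates the $\order{n}$-time single-point sweep. The delicate point — and the step I would be most careful about — is the space bound, since naively one might fear an $\order{n^2 T}$ blowup from the many DP tables. The key observation is that the enumeration is sequential: I run the $(l,r)$ invocations one at a time, discarding each $\order{T}$-sized table as soon as its scalar return value has been compared against a running maximum. Only that single maximum and the table of the current invocation are ever retained, so the working space never exceeds the $\order{T}$ used by one call to \Cref{DP for 1D ImpSea}. Hence the procedure runs in $\order{n^3 T^2}$ time and $\order{T}$ space, as claimed.
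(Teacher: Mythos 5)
Your proposal is correct and matches the paper's (implicit) argument exactly: enumerate all $\order{n^2}$ choices of the leftmost and rightmost visited points, run \Cref{DP for 1D ImpSea} for each, and take the maximum, reusing the $\order{T}$-sized table across invocations. Your explicit handling of the degenerate $l=r$ case and the remark on sequential space reuse are careful additions, but the route is the same.
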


\begin{figure}[ht]
    \centering
    \includegraphics[width=\linewidth]{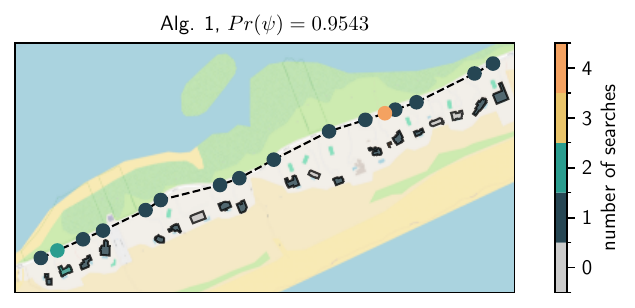} 
    \caption{``Billionare's lane" in the Hamptons in New York, USA \cite{Brennan_2015} (obtained using Cartopy and OSM \cite{Cartopy,OpenStreetMap}). Since there are no other roads, a SAR operation here reduces to \OneDImpSea.}
    \label{southampton}
\end{figure}

We show that \OneDImpSea is weakly \NPH (even when $\bfbeta = \mathbf{0}$) by a reduction from \KP. We note that the hardness proofs given in \cite{wegener_optimal_1985} and \cite{trummel_complexity_1986} do not work for our problem since they reduce from \TSP and \HP respectively, which, of course, are polynomial time solvable in one dimension.
\begin{theorem}
\label{1D ImpSea is Hard}
\label{thm:1dsea_nphard}
    \OneDImpSea is (weakly) \NPH.
\end{theorem}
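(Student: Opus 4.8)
The plan is to reduce from \KP, which is (weakly) \NPH, to \OneDImpSea, carrying out the reduction entirely in the regime $\bfbeta = \mathbf{0}$. The starting observation is that when $\beta_i = 0$ for every $i$, \Cref{probability order does not matter main thm} collapses to $\Pr{\psi} = \sum_{i : v_i \text{ searched}} p_i$: a single search of a point detects a target there with certainty, so $1 - \beta_i^{b_i}$ equals $1$ if $v_i$ is searched at least once and $0$ otherwise. Hence, combining this with \Cref{no backwards}, solving \OneDImpSea with $\bfbeta = \mathbf{0}$ amounts to choosing a subset $S \subseteq V$ of points to search so as to maximize the total prior mass $\sum_{i \in S} p_i$, subject to the search costs of $S$ \emph{plus the travel needed to sweep $S$ monotonically} fitting within $T$. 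This is exactly the shape of \KP, with ``profit'' $=$ prior and ``weight'' $=$ search cost, except for the nuisance travel term, which is the only thing that needs care.

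Given a \KP instance with $n$ items of integer weights $w_i$, profits $\pi_i$, and capacity $B$, I would build a \OneDImpSea instance as follows. Place the $n$ points at integer coordinates $v_i = i$ on the line, so that the extent of any subset, and hence its travel cost under a monotone schedule, is at most $n-1$. Set the normalized priors $p_i = \pi_i / \Pi$ with $\Pi = \sum_j \pi_j$ so that $P_X$ is a valid distribution, take $\bfbeta = \mathbf{0}$, and scale search costs and budget by $M := n$: let $c_i = M w_i$ and $T = M B + (n-1)$. Since maximizing $\sum_{i\in S} \pi_i/\Pi$ is the same as maximizing $\sum_{i\in S}\pi_i$, the objectives of the two problems coincide (for the decision versions, setting the detection threshold $k = K/\Pi$ against knapsack profit target $K$ makes them equivalent); it remains to argue that the feasible subsets coincide.

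The one real obstacle is the travel term, and the scaling by $M = n$ is exactly what neutralizes it. By \Cref{no backwards} an optimal searcher may be taken to move monotonically, and since step $0$ carries no weight it may start at the leftmost searched point for free; thus the travel cost of searching $S$ is precisely $\max S - \min S \le n-1$. If $\sum_{i\in S} w_i \le B$, then $\sum_{i\in S} c_i = M\sum_{i\in S} w_i \le MB$, and adding travel keeps the total within $MB + (n-1) = T$, so $S$ is feasible. Conversely, because the $w_i$ are integers, $\sum_{i\in S} w_i > B$ forces $\sum_{i\in S} w_i \ge B+1$, whence $\sum_{i\in S} c_i \ge MB + M = MB + n > T$ regardless of travel, so $S$ is infeasible. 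Thus the feasible subsets are exactly the feasible knapsacks, and an optimal \OneDImpSea schedule yields an optimal \KP solution. Since all numbers produced are polynomially bounded in the \KP input (we merely multiplied by $n$), the reduction is polynomial and transfers the weak \NP-hardness of \KP to \OneDImpSea, even for $\bfbeta = \mathbf{0}$, as claimed.
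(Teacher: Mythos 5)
Your proposal is correct and follows essentially the same route as the paper: a reduction from \KP with a perfect searcher ($\bfbeta=\mathbf{0}$), using \Cref{probability order does not matter main thm} and \Cref{no backwards} to identify schedules with subsets, and exploiting the integrality of the knapsack weights to make the travel term too small to matter. The only (cosmetic) difference is that you neutralize travel by scaling the search costs up by $M=n$ with points at integer coordinates, whereas the paper compresses the points into an interval of length less than $\tfrac12$ and keeps the costs unscaled --- the two are equivalent up to rescaling the instance.
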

\begin{proof}
	We show hardness by a reduction from \KP. Consider an instance $\KPinstance$ of \KP: given $n$ items, where $p(i) \in \N$ and $w(i) \in \N$ is the profit and weight of the item $i$, pick the subset of items with maximum total profit subject to its total weight being less than $B$. This is a classic \NPH problem \cite{karp_reducibility_1972,kellerer_knapsack_2004}. We now construct an appropriate instance of \OneDImpSea. Let $v_i = \frac{i}{2n}$ and $c_i = w(i)$ for each $1 \leq i \leq n$; we define $V = \{v_i\}^n_{i=1}$ and $\bfc = (c_i)^n_{i=1}$. Consider a random variable $X$ such that $\Pr{X = i} = \frac{p(i)}{\sum^n_{j=1}p(j)}$. Let $\bfbeta = \mathbf{0}$ -- i.e., the searcher is perfect. Let $T = B + \frac{1}{2}$. Now, consider the instance $\OneDImpSeaInstance$ of \OneDImpSea. Since the searcher is perfect, the probability of a schedule is simply the sum of the probabilities of the points that it visits. 
	
	Consider an optimal schedule $\psi^*$ of this instance. As the searcher is perfect, $\psi^*$ searches the points in its image exactly once. Moreover, we can assume that it travels from left to right (by \highlight{Lemma 4}). Its travel costs, therefore, is at most $\frac{n}{2n} = \frac{1}{2}$ and it has its remaining budget, which is at least $T - \frac{1}{2} = B$, to search. The cost of searching points in $\Im{\psi^*}$ cannot lie in $(B, B + \frac{1}{2}]$ since each $c_i$ is an integer -- hence, $\sum_{i \in \Im{\psi^*}}c_i = \sum_{i \in \Im{\psi^*}}w(i) \leq B$. Since $\psi^*$ maximises the probability of detection, it maximises $\frac{\sum_{i \in \Im{\psi}}p(i)}{\sum^n_{j=1}p(j)}$ and therefore $\sum_{i \in \Im{\psi}}p(i)$. Thus, $\Im{\psi^*}$ is the optimal solution for $\KPinstance$.
\end{proof}

\section{Searching an Ordered Point Set} \label{Ordered problem}
In this section, we consider another variant of \ImpSea, which we call \OrdImpSea. Our motivation is twofold: we prove, for this variant, that Step I (recall \Cref{flowchart}) can be solved efficiently; secondly, in \Cref{Simulation Results}, we show that understanding this variant enables us to design heuristics for \ImpSea. Consider $V = \{v_1, v_2 \dots v_n\}$, where points in $V$ are given in some order. A schedule $\psi$ \textit{respects} this ordering of $V$ if for all $v_i, v_j \in \Im{\psi}$ with $i < j$, there exists no $t_i \in \psi^{-1}(v_i)$ and $t_j \in \psi^{-1}(v_j)$ such that $t_i > t_j$. Note, however, that we do not require a schedule to search all points in $V$; \textit{if} two points are searched, \textit{then} they must be be searched in the given order. Informally, $\psi$ respects the given ordering of $V$ if it never moves ``backward". As in the previous section, we let $\OrdImpSeaInstance$ denote an instance of this problem.

\defproblemobj{\OrdImpSea}{A finite set of ordered points $V \subset \R^2$, a target random variable $X$ with a probability mass function $P_X$ on $V$, a search cost vector $\bfc$, an $(\mathbf{0},\bfbeta)$--imperfect searcher, and a time budget $T$.}{Find a schedule $\psi$ which respects this ordering of $V$ that maximizes $\Pr{\psi}$ subject to $w(\psi) \leq T$.}

We design a DP to solve \OrdImpSea with a similar flavor as \Cref{DP for 1D ImpSea}. Since we are not bound to a curve, we cannot ``guess" the portion of the budget that must be spent travelling. This results in the following challenge: if the distance between points is used to update the remaining budget, then we must discretise distance. The simplest way to do this is to consider a different metric ($l_1$ distance, for instance) or to discretise time. In practice, the latter restriction is not too significant (robots can execute instructions only for multiples of a ``small" measure of time constrained by, for example, clock speeds of onboard systems); prior work in search theory discretise time, see, for example \cite{mohamed_person_2020,morin_ant_2023}. Let $C$ be the number of number of intervals that we divide one unit of time into. We note that this discretisation essentially allows for the searcher to go over budget by an $\varepsilon > 0$; we can choose $C$ carefully ($C \in \order{\varepsilon^{-1}}$) to make $\varepsilon$ as small as required. We explain these details below:

\textit{On discretising time.} Here, we present an alternative to discretising time, which was required in Algorithm 2 for \OrdImpSea. Consider an $\varepsilon > 0$. Given $V \subset \R^2$, on the plane, the total travel cost of an optimal schedule (i.e., the length of the search path) is at most $n\delta$ where $\delta$ is the diameter of the point set $V$. Divide one unit of time into $\frac{n\delta}{\varepsilon}$ parts, and round up all travel costs (i.e., distances between points) to the nearest interval. The total error accumulated by a schedule using this approximation is at most $n\delta \cdot \frac{\varepsilon}{n\delta} = \varepsilon$. Since the cost to search a node is assumed to be an integer, we would not incur any ``approximation penalty" to search a node. Thus, if we are given a leeway of $\epsilon > 0$ to extend the budget, we can use \highlight{Algorithm 2} (with $C = \frac{n \delta}{\varepsilon}$) to find a schedule whose probability of detection is at least that of the optimal schedule (while using budget $T$).

\begin{algorithm}[ht]
    \caption{DP for \OrdImpSea}
    \label{DP for OrdImpSea}
    \small
    \KwData{An instance $\OrdImpSeaInstance$ of \OrdImpSea, $C > 0$}
    \KwResult{The probability of an optimal schedule for $(V,X,\bfbeta,\bfc,T)$, up to an integral multiple of $C$}
    \tcp{scale time according to $C$}
    $\tau \gets \lceil T \cdot C \rceil$\;
    \tcp{$p[i, t]$ denotes the optimal probability using budget $t$ for searching up to $v_i$} 
    $p[0, t] \gets 0$ for all $0 \leq t \leq \tau$\;
    $d(v_i, v_0) \gets 0$ for all $1 \leq i \leq n$\;
    $i \gets 1$\;
    \tcp{A bottom-up construction; at iteration $i$, $p[i,t]$ so that considers searching up to $v_i$ with time $t$}
    \While{$i \leq n$} 
    {
        \tcp{for each $t$, consider searching $v_i$ $j$ times and that the previous point searched was $v_k$}
        \For{$0 \leq t \leq \tau$}
        {
            \tcp{$S$ is the set of possible tuples $(j,k)$ so that remaining time is non-negative}
            $S \gets \{(j,k) \mid j \geq 1 \text{, } 0 \leq k < i \text{, and }t - \lceil(j \cdot c_i + d(v_i, v_k)) \cdot C\rceil \geq 0\}$\;
            $p[i, t] \gets \max \{p[k, t - \lceil(j \cdot c_i + d(v_i, v_k)) \cdot C\rceil] + (1 - \beta^{j}_i)\cdot p_i \mid (j,k) \in S\}$ \; \label{DP Equation for OrdImpSea}
        }
        $i \gets i + 1$\;
    }
    \Return{$\max \{p[i,\tau] \mid 1 \leq i \leq n\}$}
\end{algorithm}

\begin{theorem}
    \Cref{DP for OrdImpSea} solves \OrdImpSea in $\order{n^2T^2C^2}$-time, using $\order{nTC}$-space.
\end{theorem}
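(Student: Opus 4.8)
The plan is to establish correctness by induction on the outer-loop index $i$, and then read off the time and space bounds directly from the loop structure. The invariant I would prove is that, after the $i$-th iteration, $p[i,t]$ equals the maximum of $\Pr{\psi}$ over all schedules $\psi$ that (a) respect the given ordering of $V$, (b) do not revisit points, (c) have $v_i$ as their last searched point, and (d) have rounded scaled weight at most $t$. The base case $p[0,t]=0$ is correct because the empty prefix anchored at the virtual start $v_0$ detects nothing; the convention $d(v_i,v_0)=0$ makes $v_0$ a cost-free launch point so that the first genuine search is charged only its own search cost.

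For the inductive step, fix $i,t$ and let $\psi$ be optimal subject to (a)--(d). Since $\psi$ searches $v_i$ last, it searches it some $j\geq 1$ consecutive times; deleting this final block yields a schedule $\psi'$ whose last searched point is some $v_k$ with $k<i$ (taking $k=0$ if $v_i$ was the only searched point). The decisive structural fact is \Cref{probability order does not matter main thm}: $\Pr{\psi}=\sum_i (1-\beta_i^{b_i})p_i$ is additive across points, so the final block contributes exactly $(1-\beta_i^{j})p_i$ and $\Pr{\psi}=\Pr{\psi'}+(1-\beta_i^{j})p_i$. The induction hypothesis gives $\Pr{\psi'}\leq p[k,\,t-\lceil(j c_i+d(v_i,v_k))C\rceil]$, which is precisely the quantity maximized in \Cref{DP Equation for OrdImpSea}; the reverse inequality follows by exhibiting, for each admissible $(j,k)\in S$, a concrete order-respecting schedule realizing the right-hand side. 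To justify restricting throughout to no-revisit schedules I would invoke the ordered analogue of \Cref{optimal no revisit}: the reconstruction of \Cref{lower weight for no revisits} preserves the order of first appearances, so it sends an order-respecting schedule to an order-respecting no-revisit schedule of no greater weight and equal detection probability.

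The running time comes from three nested quantities. The outer loop runs $n$ times; the middle loop ranges over $\tau+1=\order{TC}$ budget values; and each cell maximizes over $S$, whose pairs $(j,k)$ have $0\le k<i=\order{n}$ choices and $j$ ranging over $\order{\tau}=\order{TC}$ admissible multiplicities, so $|S|=\order{nTC}$. Multiplying yields $\order{n\cdot TC\cdot nTC}=\order{n^2T^2C^2}$ time. For space, the table $p[\cdot,\cdot]$ holds $(n+1)(\tau+1)=\order{nTC}$ entries; unlike \Cref{DP for 1D ImpSea}, we cannot collapse to a single row, since the transition into $p[i,\cdot]$ reads $p[k,\cdot]$ for every $k<i$, forcing all rows to be retained and giving $\order{nTC}$ space.

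I expect the main obstacle to be the bookkeeping around discretization rather than the recurrence itself. Because each combined travel-plus-search cost is rounded \emph{up} to the nearest $1/C$ before being charged, the DP's feasible set is exactly the schedules of rounded scaled weight at most $\tau$, and one must argue (as in the discretisation discussion preceding \Cref{DP for OrdImpSea}) that this set still contains an optimizer of the true continuous instance up to the controllable slack $\varepsilon$, so that ``solves'' is understood in this discretized sense. A secondary subtlety is the degenerate transition $k=0$, which lets the first searched point be chosen freely with zero travel charged from $v_0$; it must be present in $S$ for the induction to cover single-point schedules, and I would check this case explicitly.
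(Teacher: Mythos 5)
Your proposal is correct and follows essentially the same route as the paper: the paper's proof is a terser statement of the same DP correctness argument (guess $j$ and the predecessor $v_k$, charge the rounded scaled cost, appeal to the additivity of $\Pr{\psi}$ from \Cref{probability order does not matter main thm} and the no-revisit structure of \Cref{optimal no revisit}), with the complexity bounds read off from the loop structure exactly as you do. Your added care about the $k=0$ base case and the fact that ``solves'' must be understood up to the discretization slack $\varepsilon$ is consistent with the discussion the paper gives just before the algorithm.
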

\begin{proof}
    The time and space complexity analysis of \Cref{DP for OrdImpSea} is straightforward. As for correctness, we study the assignment in \Cref{DP Equation for OrdImpSea}. To compute the maximum probability of detection using a budget $t$ (scaled using $C$) for searching up to $v_i$ (and $v_i$ is searched), i.e, $p[i, t]$; we ``guess" (i.e., try all possibilities) $j$, the number of times that we search $v_i$ and the previous vertex $v_k$ that the schedule searched (which is why $j \geq 1$). Using the remaining budget $t - \lceil(j \cdot c_i + d(v_i, v_k)) \cdot C \rceil$ (note the scaling using $C$), we search up to $v_k$. $S$ is simply the set of permissible tuples $(j,k)$ so that the remaining time is non-negative. The algorithm ``guesses" the last $v_i$ to be searched and returns $p[i,\tau]$. Observe the implicit use of \Cref{optimal no revisit} as in \Cref{DP for 1D ImpSea}.
\end{proof}
Note that the proof of hardness of \OneDImpSea (\Cref{1D ImpSea is Hard}) immediately shows that \OrdImpSea is also weakly \NPH: it inherently assumes an ordering on $V$ (ascending order on the $x$-axis) and visits the vertices in this order. Thus, we have:
\begin{theorem}
    \OrdImpSea is (weakly) \NPH.
\end{theorem}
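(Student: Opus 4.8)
The plan is to avoid constructing a fresh reduction and instead argue that the \KP instance built in the proof of \Cref{1D ImpSea is Hard} is \emph{already} a valid \OrdImpSea instance, so that the very same reduction establishes hardness here. First I would recall that construction: from a \KP instance $\KPinstance$ it places $v_i = \frac{i}{2n}$ with $c_i = w(i)$, sets $\bfbeta = \mathbf{0}$, chooses $\Pr{X = i} \propto p(i)$, and puts $T = B + \frac{1}{2}$. The crucial observation is that these points are already sorted in ascending order along the line, so reading them as the ordered set $v_1, v_2, \dots, v_n$ turns this into an instance of \OrdImpSea in which the order-respecting constraint is exactly the requirement that searched points be visited in increasing index order.

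Next I would check that the analysis in \Cref{1D ImpSea is Hard} transfers essentially verbatim. In \OrdImpSea the searcher lives in the plane, but since the constructed points are collinear, any order-respecting schedule searches a subset $S \subseteq \{1, \dots, n\}$ in increasing order, and by the triangle inequality (telescoping along the segment) its travel cost is exactly $v_{\max S} - v_{\min S} \le \frac{n}{2n} = \frac{1}{2}$. The probability of detection still depends only on \emph{which} points are searched (\Cref{probability order does not matter main thm}) and, since the searcher is perfect, equals $\sum_{i \in S} \Pr{X = i}$. Hence the remaining search budget is at least $B$, the integrality of the $c_i$ forces $\sum_{i \in S} c_i \le B$, and maximizing the detection probability is equivalent to maximizing $\sum_{i \in S} p(i)$ subject to $\sum_{i \in S} w(i) \le B$, i.e., to solving $\KPinstance$.

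An equivalent way to phrase the same argument is to observe that \OneDImpSea is precisely the special case of \OrdImpSea in which the points are collinear and ordered by coordinate: \Cref{no backwards} guarantees an optimal forward-moving (hence order-respecting) schedule on the line, so the optimal \OrdImpSea value on such an instance coincides with the optimal \OneDImpSea value. Either framing reduces a known weakly \NPH problem to \OrdImpSea in polynomial time. I do not expect a genuine obstacle; the only point demanding care is confirming that the order-respecting constraint coincides with the forward-only movement exploited in \Cref{1D ImpSea is Hard}, and that treating the collinear plane instance as a line leaves both the weight and the detection probability of every feasible schedule unchanged. Because \KP is only weakly \NPH (it admits a pseudopolynomial algorithm, mirrored by \Cref{DP for OrdImpSea}), the resulting hardness for \OrdImpSea is likewise weak.
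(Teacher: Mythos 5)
Your proposal is correct and matches the paper's argument, which likewise observes that the \KP reduction for \OneDImpSea already produces an ordered (collinear, ascending) instance and hence transfers directly to \OrdImpSea. The extra verification you supply (telescoping travel cost, equivalence of order-respecting and forward-only schedules) is consistent with the paper's brief justification.
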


\section{Imperfect Searcher with Uniform Priors} \label{General problem}
Another scenario where Step I becomes tractable is when we have uniform priors: we call this problem \TwoDImpSea.
\defproblemobj{\TwoDImpSea}{A finite set of points $V \subset \R^2$, a target random variable $X$ with probability mass function $P_X \sim U$ on $V$, a search cost vector $c \cdot \mathbf{1}$, an $(\mathbf{0},\beta \cdot \mathbf{1})$--imperfect searcher, and a time budget $T$.}{Find a schedule $\psi$ that maximizes $\Pr{\psi}$ subject to $w(\psi) \leq T$.}

We use $\TwoDImpSeaInstance$ to denote an instance of this problem. As the main result in this section (\Cref{dual approx for 2D}), we show that by slightly relaxing the budget constraint (within a factor of $(1 + \varepsilon)$ where $\varepsilon > 0$ can be arbitrarily small) we can achieve at least as much probability of detection as an optimal schedule of the given budget in polynomial time. For any schedule $\psi$, we let $\mathbf{s}(\psi) = \{s_i(\psi)\}^n_{i=1}$ denote the (multi) set of the number of times that each $v_i \in V$ is searched -- i.e, $v_i$ is searched $s_i(\psi)$ times by $\psi$ (we drop the parenthesis when appropriate). Crucially, $\mathbf{s}(\psi)$ is \textit{not} a sequence: its ordering does not matter. We have a straightforward claim.

\begin{obs}\label{same number of searches}
	Consider an instance $\TwoDImpSeaInstance$ of this problem. If $\mathbf{s}(\psi) = \mathbf{s}(\psi')$ of two schedules $\psi$ and $\psi'$, then $\Pr{\psi} = \Pr{\psi'}$. Moreover, the search costs for $\psi$ and $\psi'$ are equal. 
\end{obs}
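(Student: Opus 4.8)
The plan is to reduce both claims to the closed form for the detection probability established in \Cref{probability order does not matter main thm} and then exploit the fact that, under uniform priors, uniform overlook probabilities, and uniform search costs, every relevant quantity becomes a \emph{symmetric} function of the search counts. Since $\mathbf{s}(\psi) = \mathbf{s}(\psi')$ is equality as multisets (ordering does not matter), symmetry is precisely the property needed to conclude equality.

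First I would invoke \Cref{probability order does not matter main thm}, which gives $\Pr{\psi} = \sum^n_{i=1}\left((1 - \beta^{s_i(\psi)}_i) p_i\right)$, where $s_i(\psi)$ is the number of times $\psi$ searches $v_i$. Specializing to a \TwoDImpSea instance, where $p_i = 1/n$ and $\beta_i = \beta$ for all $i$, this collapses to
\[
    \Pr{\psi} = \frac{1}{n}\sum^n_{i=1}\left(1 - \beta^{s_i(\psi)}\right).
\]
The right-hand side depends only on the multiset of values $\{s_i(\psi)\}^n_{i=1}$: each summand $1 - \beta^{s}$ is determined by the count $s$ alone, and summation is insensitive to order. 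Concretely, $\mathbf{s}(\psi) = \mathbf{s}(\psi')$ means there is a permutation $\pi$ of the indices with $s_i(\psi) = s_{\pi(i)}(\psi')$, so reindexing the sum yields $\sum_i (1 - \beta^{s_i(\psi)}) = \sum_j (1 - \beta^{s_j(\psi')})$ and hence $\Pr{\psi} = \Pr{\psi'}$. For the search-cost claim I would argue identically: the total time spent searching (as opposed to travelling) is $\sum^n_{i=1} c_i\, s_i(\psi) = c\sum^n_{i=1} s_i(\psi)$, using the uniform search cost $c_i = c$. This is $c$ times the sum of the multiset $\mathbf{s}(\psi)$, which is manifestly symmetric, so equal multisets force equal search costs.

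There is no genuine obstacle here; the only point deserving emphasis is that uniformity of \emph{both} $P_X$ and $\bfbeta$ is exactly what renders the detection-probability expression symmetric in the indices. Without it, permuting which points receive which search counts would change $\Pr{\psi}$, and the observation would fail — which is why it is stated only for the uniform variant. I would make this dependence explicit in the write-up so the reader sees precisely where the hypotheses of \TwoDImpSea are used.
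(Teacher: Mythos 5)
Your proposal is correct and matches the paper's reasoning: the paper treats this as a straightforward consequence of \Cref{probability order does not matter main thm}, noting in the surrounding remark that uniformity of $P_X$ and $\bfbeta$ makes the detection probability a symmetric function of the multiset of search counts, and that the second claim follows from uniform search costs. Your write-up simply makes explicit the permutation/reindexing argument that the paper leaves implicit.
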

Note the subtle difference between our comment after Lemma 1 and the (first part of the) above claim: in general, the probabilities of two schedules which search a point the same number of times are the same (i.e, $(s_i(\psi))^n_{i=1} = (s_i(\psi'))^n_{i=1}$); however, since $P_X \sim U$, we just require the (multi) sets $\{s_i(\psi)\}^n_{i=1}$ and $\{s_i(\psi')\}^n_{i=1}$ to be equal! The second part of the claim follows from our assumption that we have uniform search costs. \Cref{same number of searches} \textit{does not} imply that the weights of $\psi$ and $\psi'$ are the same: their travel costs might differ, possibly significantly. 

\begin{theorem}\label{dual approx for 2D}
	Let $\psi^*$ be an optimal schedule of an instance $\TwoDImpSeaInstance$ of \TwoDImpSea. For any fixed $\varepsilon>0$, we can compute, in polynomial time, a schedule $\psi$ such that $w(\psi) \leq (1+\varepsilon)T$  and $\Pr{\psi} \geq \Pr{\psi^*}$.
\end{theorem}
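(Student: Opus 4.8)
The plan is to exploit uniformity to collapse \TwoDImpSea into a one-parameter family indexed by ``how many distinct points to visit,'' and then to invoke a geometric approximation scheme for the resulting shortest-$k$-point-path subproblem. First I would reduce the problem to a clean combinatorial form. By \Cref{probability order does not matter main thm} together with the uniform priors $p_i = 1/n$ and the uniform overlook probability $\beta$, every schedule $\psi$ satisfies $\Pr{\psi} = \frac{1}{n}\sum_{i=1}^n\left(1 - \beta^{s_i}\right)$, where $s_i = s_i(\psi)$. The marginal gain of the $j$-th search of a point equals $\frac{1}{n}\beta^{j-1}(1-\beta)$, which is positive, strictly decreasing in $j$, and \emph{identical across all points}. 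Hence $1-\beta^{s}$ is concave in $s$, so for a fixed number $N$ of total searches distributed over a fixed number $m$ of distinct points the probability is maximized by the \emph{balanced} allocation (each point searched $\lfloor N/m\rfloor$ or $\lceil N/m\rceil$ times); moreover, by \Cref{same number of searches} the resulting value depends only on $m$ and $N$, not on \emph{which} $m$ points are chosen. Consequently, among all schedules visiting exactly $m$ points an optimal one visits the $m$ points admitting the cheapest visiting path (to maximize the budget left for searching), and by \Cref{optimal no revisit} the cost of visiting a set $S$ is the length of an optimal \TSP path on $S$.

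Next I would describe the algorithm. For each $m \in \{1,\dots,n\}$, let $L_m$ be the minimum length of a path visiting at least $m$ points of $V$ (the ``$k$-point'' path problem, a variant of \TSP). Although computing $L_m$ exactly is \NPH, the Euclidean setting admits a polynomial-time approximation scheme: for any fixed $\varepsilon > 0$ we can compute, in polynomial time, a path $\pi_m$ visiting at least $m$ points whose length $\ell_m$ satisfies $\ell_m \leq (1+\varepsilon)L_m$. Using the order along $\pi_m$ as the visiting order, I would allocate $n_m := \lfloor((1+\varepsilon)T - \ell_m)/c\rfloor$ searches (discarding $m$ if this is negative) in a balanced way over the points of $\pi_m$, compute the resulting detection probability, and finally output the schedule achieving the maximum over all $m$. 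By construction every schedule considered has weight $\ell_m + c\,n_m \leq (1+\varepsilon)T$, so the budget guarantee holds; the loop makes $n$ calls to the PTAS plus polynomial bookkeeping, hence the whole procedure runs in polynomial time for fixed $\varepsilon$.

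Finally I would verify the probability guarantee $\Pr{\psi} \geq \Pr{\psi^*}$. Let $\psi^*$ visit a set $V^*$ with $|V^*| = m^*$, searching each of its points at least once, for a total of $N^* \geq m^*$ searches. By \Cref{optimal no revisit} its travel cost is at least $L_{m^*}$, so $c\,N^* \leq T - L_{m^*}$, i.e.\ $N^* \leq \lfloor(T - L_{m^*})/c\rfloor$. At the iterate $m = m^*$ the affordable search count satisfies $n_{m^*} = \lfloor((1+\varepsilon)T - \ell_{m^*})/c\rfloor \geq \lfloor((1+\varepsilon)T - (1+\varepsilon)L_{m^*})/c\rfloor \geq \lfloor(T - L_{m^*})/c\rfloor \geq N^*$, and $\pi_{m^*}$ visits at least $m^*$ points. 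Since the balanced allocation maximizes detection probability for a given number of points and searches, and detection probability is monotone nondecreasing in both, the schedule produced for $m = m^*$ has probability at least that of the balanced allocation of $N^*$ searches over $m^*$ points, which by the reduction above is at least $\Pr{\psi^*}$.

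The main obstacle, and the step I would treat most carefully, is the geometric $k$-point-path primitive: one must invoke (or adapt) the Arora/Mitchell dissection framework for the shortest path through a prescribed number of input points (equivalently $k$-\TSP / $k$-MST), and confirm that a \emph{path} variant with the one-sided ``at least $m$ points'' guarantee is available, so that the factor-$(1+\varepsilon)$ overrun of the travel budget is the only relaxation incurred; a prescribed start point (a depot) would be handled by the rooted version of the same scheme.
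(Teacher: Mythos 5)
Your proposal is correct and follows essentially the same route as the paper's proof: guess the number $k$ of visited points, use the Euclidean $k$-\TSP path PTAS with the relaxed budget $(1+\varepsilon)T$ so that the $(1+\varepsilon)$-approximate tour still leaves at least as much search time as the optimum had, and allocate searches in a balanced way. The only (cosmetic) difference is that you justify the balanced allocation via concavity of $1-\beta^s$, whereas the paper gives an explicit pairwise exchange argument establishing $\max_{i,j}|s_i-s_j|=1$; your write-up is, if anything, slightly more careful about the ``at least $m$ points'' guarantee of the PTAS and the comparison $n_{m^*}\geq N^*$.
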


\begin{proof}
	Assume that the optimal schedule $\psi^*$ visits exactly $k$ points in $V$. Without loss in generality, assume that $\Im{\psi^*} = \{v_1, v_2 \dots v_k\}$. Consider a schedule $\psi'$ with $\mathbf{s}(\psi') = \mathbf{s}(\psi^*)$ that travels the shortest route spanning \textit{any} $k$ points in $V$. Then, since the travel costs for $\psi'$ is at most that of $\psi^*$ and the search costs are equal (by \Cref{same number of searches}), $w(\psi') \leq w(\psi^*)$. Moreover, $\Pr{\psi'} = \Pr{\psi^*}$ (also by \Cref{same number of searches}). Thus, it suffices to look for schedules traversing shortest routes between $k$ points: without loss in generality, we assume that $\psi^*$ is such a schedule.  Denote the travel time of $\psi^*$ by $L$, then the search time of $\psi^*$ is $T - L$. Note that $\sum_{i=1}^{k}s_i \leq \lfloor \frac{T- L }{c}\rfloor$ and that $\Pr{\psi^*}= \sum^k_{i=1} \frac{1}{n}\left(1 - \beta^{s_i}\right)$ (a simplification of Lemma 1 when $P_X \sim U$).
	
	We now show that $\max_{i,j}|s_i - s_j| = 1$. For any distinct $i$ and $j$, suppose $s_i + 2 \le s_j$. Let $s_i'$ and $s_j'$ be such that $s_i + s_j = s_i' + s_j'$ and $s_i < s_i' \leq s_j' < s_j$. Then,
	\begin{align*}
		0 &> \underbrace{(\beta^{s_i} - \beta^{s_j'})}_{<0}\underbrace{(\beta^{s_i'-s_i} - 1)}_{<0}\\
		&= \beta^{s'_i} - \beta^{s_i} + \beta^{s'_i - s_i + s_j'} - \beta^{s'_j}\\
		&= \beta^{s_i'} + \beta^{s_j'} - (\beta^{s_i} + \beta^{s_j})
	\end{align*}
	
	The last equality follows as $s_j - s_j' = s_i' - s_i$. Therefore, $ \frac{1}{n}(2 - (\beta^{s_i'} + \beta^{s_j'})) > \frac{1}{n}(2 - (\beta^{s_i} + \beta^{s_j}))$.
	
	Define, for all other $l$, $s'_l = s_l$. Consider $\psi'$ which executes a shortest route spanning $\{v_1, v_2 \dots v_k\}$ and searches those vertices $s'_i$ times. As $s_i + s_j = s_i' + s_j'$, $w(\psi') \leq w(\psi^*) \leq T$. Moreover, by the assertion from the previous paragraph,
	\begin{align*}
		& \sum^k_{l=1} \frac{1}{n}\left(1 - \beta^{s'_l}\right) > \sum^k_{l=1} \frac{1}{n}\left(1 - \beta^{s_l}\right)\\
		\implies & \Pr{\psi'} > \Pr{\psi^*}
	\end{align*}
	This contradicts our assumption that $\psi^*$ is optimal. Thus, we have shown that search effort is allocated (almost) equally in an optimal schedule.
	
	We are ready to describe our (dual) approximation algorithm. Guess the number of points $k$ that the optimal schedule visits: i.e., run through all $k \in \{1, 2 \dots n\}$, and pick the schedule that maximizes probability. Approximate the shortest $k$-TSP tour to a factor of $(1 + \varepsilon)$ using a known polynomial-time approximation scheme (PTAS) (see \cite{arora_polynomial_1998,mitchell_guillotine_1999}). The route returned by the $k$-TSP approximation is no longer than $(1 + \varepsilon)L$ (recall that $L$ is the total travel time by the optimal schedule). Hence, given a budget of time $(1 + \varepsilon)T$, the remaining time for searching is no smaller than $\left\lfloor \frac{(1+ \varepsilon)T- (1 + \varepsilon)L}{c} \right\rfloor$. If we divide that search time roughly equally between points, we get a schedule with probability of detection at least  $\Pr{\psi^*}$.
\end{proof}

Finally, we show that \TwoDImpSea is \NPH. This complements a similar result in \cite{wegener_optimal_1985}, where the authors seek to minimize expected time until detection. Our proof is significantly shorter.

\begin{lemma} 
	\label{lem:2d_nphard}
	\TwoDImpSea is \NPH.
\end{lemma}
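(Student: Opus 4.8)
The plan is to reduce from the decision version of (geometric) \TSP in the Euclidean plane, which is \NPH by the classical result of Papadimitriou. The key observation is that with uniform priors and a \emph{perfect} searcher --- i.e., taking $\beta = 0$, which is permitted and was already exploited in the proof of \Cref{1D ImpSea is Hard} --- \Cref{probability order does not matter main thm} collapses the objective into a pure coverage quantity. Indeed, $\beta^{b_i} = 1$ when $b_i = 0$ and $\beta^{b_i} = 0$ whenever $v_i$ is searched at least once, so $\Pr{\psi} = |\Im{\psi}|/n$. Thus, with $\beta = 0$ an optimal schedule simply visits as many distinct points as possible within the budget and repeated searches are useless; this is exactly the perfect-searcher/\OP regime discussed in \Cref{perfect searcher lit review}, and it is what makes the reduction short.

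Concretely, given a \TSP instance consisting of points $V \subset \R^2$ and a length bound $B$, I would build the \TwoDImpSea instance $\TwoDImpSeaInstance$ on the same point set with $\beta = 0$, a uniform search cost $c$ (any fixed positive value, e.g.\ $c = 1$), and budget $T = B + nc$. A schedule visiting all $n$ points and searching each exactly once has weight equal to its open-path travel length plus $nc$; hence $\Pr{\psi} = 1$ is achievable within $T$ if and only if there is a Hamiltonian path through $V$ of length at most $B$. Since extra searches never help when $\beta = 0$ and omitting any point strictly lowers $\Pr{\psi}$ by $1/n$, the optimal detection probability equals $1$ exactly when the \TSP instance is a yes-instance, so deciding whether the \TwoDImpSea optimum reaches $1$ solves Euclidean \TSP.

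The steps, in order, are: (i) invoke \Cref{probability order does not matter main thm} with $\beta = 0$ to rewrite the objective as $|\Im{\psi}|/n$; (ii) give the polynomial-time construction above; (iii) verify both directions of the equivalence between ``optimal probability $= 1$'' and ``a short Hamiltonian path exists'', where the only-if direction uses \Cref{optimal no revisit} so that the travel cost of an optimal all-points schedule is exactly a Hamiltonian-path length; and (iv) conclude from the hardness of the source problem. The main obstacle is a modeling one rather than a computational one: I must reduce from a problem that is genuinely hard \emph{in the plane}, since (as noted just before \Cref{1D ImpSea is Hard}) combinatorial \HP and \TSP trivialize in one dimension, so the planar embedding is essential. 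A secondary subtlety is that a schedule is an \emph{open} walk with a freely chosen (or depot-constrained) start $\psi(0)$ rather than a closed tour; this means the natural source problem is the shortest Hamiltonian \emph{path}, so I would either reduce directly from its Euclidean version or invoke the interreducibility of the planar cycle and path variants, both of which are \NPH. Absorbing the uniform search cost into the budget via $T = B + nc$ keeps the equivalence exact regardless of whether zero search costs are admissible.
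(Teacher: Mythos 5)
Your reduction is correct, but it takes a different route from the paper. The paper reduces directly from \OP (Euclidean \textsc{Orienteering}), which is strongly \NPH: setting $\bfbeta = \bfc = \mathbf{0}$ makes \TwoDImpSea literally coincide with \OP, since a schedule with detection probability $\frac{k}{n}$ is exactly a path of length at most $T$ visiting $k$ points, and vice versa. That identification is the entire proof --- no threshold gadget, no budget adjustment. You instead reduce from the decision version of Euclidean \TSP (really the Hamiltonian-path variant), using the full-coverage threshold ``optimal probability $=1$'' and absorbing a positive uniform search cost via $T = B + nc$; the only-if direction correctly leans on \Cref{optimal no revisit} to bound the travel of an all-points schedule by a Hamiltonian-path length. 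Both source problems are \NPH in the plane, so both arguments are sound; the paper's buys brevity and inherits strong \NP-hardness from \OP, while yours has the minor virtue of working with strictly positive search costs and of not presupposing hardness of \OP. Two small points on your write-up: first, with $\beta = 0$ the objective is $|\{i : b_i \geq 1\}|/n$, the fraction of points \emph{searched} at least once, not $|\Im{\psi}|/n$ --- a schedule may pass through a point without searching it; this does not affect your equivalence but the identity as stated is slightly off. Second, you correctly flag the open-path-versus-closed-tour issue and the need for a source problem that is hard in the plane; both concerns are handled the same way (implicitly) in the paper's \OP reduction, since \OP is already a path problem.
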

\begin{proof}
	We show that even with a perfect searcher and instantaneous search ($\bfbeta = \bfc = \boldsymbol{0}$), \TwoDImpSea is \NPH, using a reduction from \OP: given $n$ points in the Euclidean plane and a budget of length $T$, find a path whose length does not exceed $T$ that visits as many points as possible. The \OP problem is known to be strongly \NPH \cite{golden_orienteering_1987}. Clearly, if the optimal search route for the instance $(V, 0, 0, T)$ has probability of detection of $\frac{k}{n}$, we have an optimal route visiting $k$ points in the corresponding \OP instance, and vice versa.
\end{proof}
\section{Imperfect Searcher -- IP and Heuristics}\label{Simulation Results}
\newcommand{\orienteeringBenchmark}{\emph{orienteering}}
The experiments were carried out on a regular desktop workstation with an
AMD Ryzen 7 5800X ($8\times {3.8}\,$GHz) CPU and $128\,$GB of RAM.
Code and data are available\footnote{\url{https://gitlab.ibr.cs.tu-bs.de/alg/imperfect-sensor-search}}.
Benchmark instances were generated from instances for \OP~\cite{gunawan2016orienteering} where each point has an associated score and the goal is to maximize total collected score in a given time budget. We scaled the scores from the benchmark sets Tsiligirides~\cite{tsiligirides1984heuristic} and Chao \textit{et al.}\cite{chao1996fast} to obtain $P_X$. $\bfbeta$ was sampled from a Dirichlet distribution. For each instance from the benchmark set we generated $10$ instances with small random search costs; resulting in $890$ total instances.

We implemented three different solvers: (i) an exact integer linear programming solver (IP); (ii) a greedy heuristic; (iii-iv) \Cref{DP for 1D ImpSea,DP for OrdImpSea} with an ordering that stems from an approximate TSP path \cite{christofides_worst-case_2022}. 
All solvers were executed with a time limit of $300$s. \Cref{DP for 1D ImpSea,DP for OrdImpSea} were executed with values $C=\ $1, 10, 20. Designing, implementing, and optimising the IP for \ImpSea is of independent interest; we omit the details here due to lack of space. The greedy heuristic we use is the following: at each time step, select the next point to search based on the greatest ``bang for the buck" (i.e., the ratio of belief probability to the cost of moving to that point and searching it). The details are presented in \Cref{Greedy for GenImpSea}.  

\begin{algorithm}[ht]
	\caption{Greedy Algorithm for \GenImpSea}
	\label{Greedy for GenImpSea}
	\KwData{An instance $\GenImpSeaInstance$ of \GenImpSea}
	\KwResult{The probability of a greedy schedule}
	\tcp{$s[i]$ denotes the number of times that we have searched $v_i$} 
	$s[i] \gets 0$ for all $1 \leq i \leq n$\;
	\tcp{$p[i]$ is the probability that the target is at $v_i$ at the current time step}
	$p \gets P_X$\;
	\tcp{$\tau$ denotes the remaining budget}
	$\tau \gets T$\;
	\tcp{$r$ denotes the position of the robot; choose the best "bang for your buck" to start}
	$r \gets \argmax \{\frac{\beta_ip[i]}{c_i} \mid 1 \leq i \leq n\}$\;
	$s[r] \gets s[r] +1$\;
	$\tau \gets \tau - c_r$\;
	$p \gets \textsc{UpdateProb}(p, r)$\;
	\tcp{Choose the best "bang for your buck" as long as $\tau \geq 0$}
	\While{$\tau \geq 0$} 
	{
		$r' \gets \argmax \{\frac{\beta_ip[i]}{c_i + d(v_r, v_i)}\mid 1 \leq i \leq n\ \land \ c_i + d(v_r, v_i) \leq \tau\}$\;
		\If{$r' \gets \textsc{null}$}
		{
			$\tau \gets 0$\;
		}
		\Else
		{
			$\tau \gets \tau - c_{r'} - d(v_{r}, v_{r'})$\;
			$p \gets \textsc{UpdateProb}(p, r')$\;
			$s[r'] \gets s[r'] +1$\;
			$r \gets r'$\;
		}
	}
	$\Pr{\psi} \gets \textsc{ComputeProb}(\bfbeta, s)$\;
	\Return{$\Pr{\psi}$}
\end{algorithm}

\cref{fig:orienteering-benchmark graph} shows the (absolute) gaps to the best bound found by the IP and the runtimes for all solvers.
IP was able to find an optimal solution for $556$ instances. \cref{DP for OrdImpSea} consistently finds solutions very close ($<5\%$) to the best bound within a solve time that is only a fraction of that of the IP. As expected, larger $C$ values yield better solutions in terms of quality. Despite the greedy algorithm being the fastest solver, it finds solutions far away from the best bound.

We also generated instances using OpenStreetMap \cite{Cartopy,OpenStreetMap} -- see \Cref{southampton,example of solvers with OSM}. Our points of interest are buildings and their footprints equal to search costs. $T$ was generated proportional to the diameter of the map while $P_X$ and $\bfbeta$ were sampled from a Dirichlet distribution.

\begin{figure}
    \centering
    \includegraphics[width=\columnwidth]{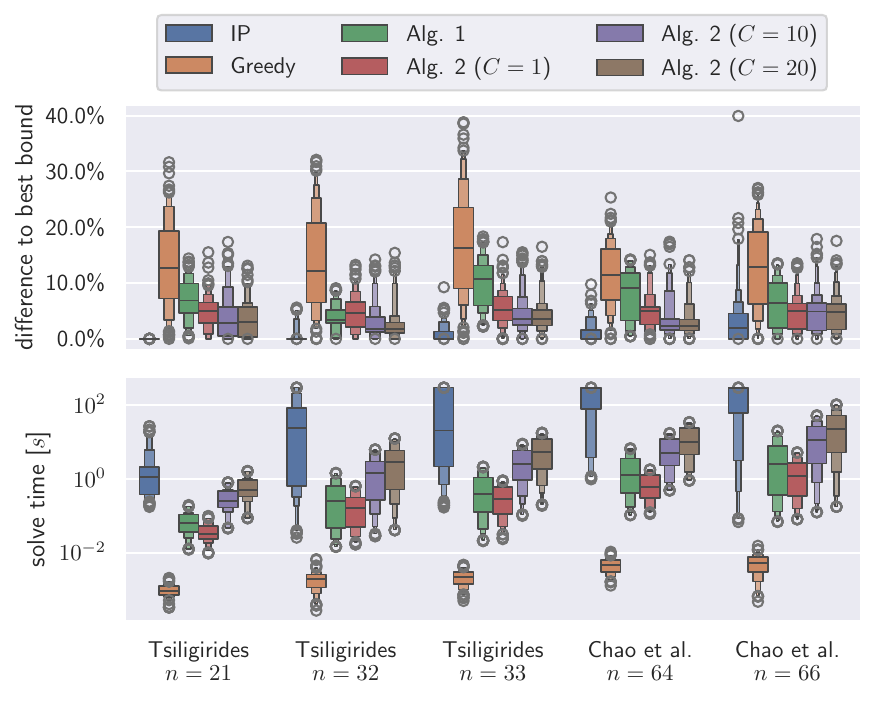}
    \caption{Gaps and runtime of all solvers and configurations on benchmarks. Instances of the same size are grouped together. Note that the $y$-axis in the bottom figure is a logarithmic scale.}
    \label{fig:orienteering-benchmark graph}
\end{figure}

\begin{figure}
    \centering
    \includegraphics[width=\columnwidth]{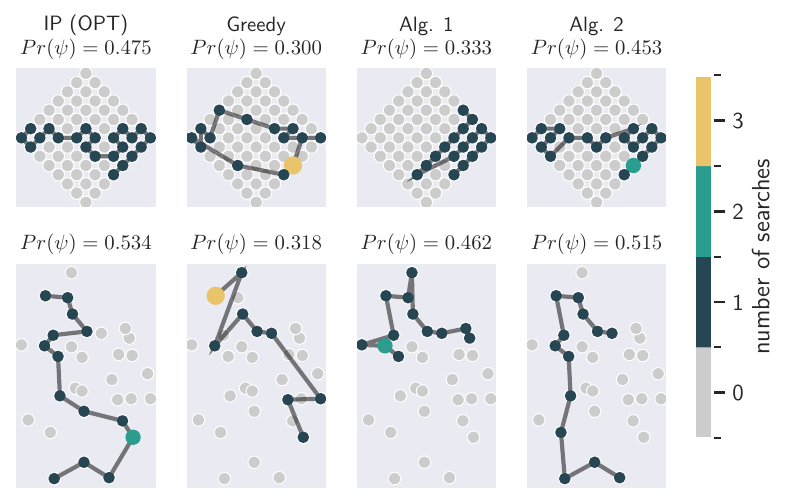}
    \caption{Examples of benchmark instances from \cite{chao1996fast} (top) and \cite{tsiligirides1984heuristic} (bottom).}
    \label{fig:orienteering-benchmark examples}
\end{figure}

\section{Conclusion}\label{conclusion}
In this paper, we defined several variants of \ImpSea, where the objective is to move between points in the plane with a searcher with imperfect sensing capabilities in search of a stationary target. We described a three-step approach (\Cref{flowchart}) to solve this problem. We show that Step II can be handled without computational overhead in \Cref{no revisits}. We show that Step III is equivalent to solving the 1D version of this problem (which itself is \NPH) -- we describe a pseudo-polynomial DP to solve it. When we have additional restrictions on the input parameters, we show that Step I, too, can be solved optimally (in \Cref{Ordered problem}, the input set is ordered; \Cref{General problem} considers uniform priors). The algorithms to solve these versions inform the heuristic we present in \Cref{Simulation Results} for \ImpSea: where we compare it to the IP we design, as well as a greedy heuristic. While results in this paper were stated for points in the plane, we note that they apply, with minimal modifications, to any metric space.  
\newcommand{\longer}[1]{{}}

There are several directions for further research. Instead of maximizing probability of detection, we may seek to minimize the expected time to detection (ETTD), for which our key technical result, \Cref{optimal no revisit}, no longer holds \cite{lossner_discrete_1982}. Other objective functions, and allowing false positives ($\bfalpha \neq \mathbf{0}$) are of interest -- current results are largely limited to heuristics and impose grids on the search area \cite{chung_decision-making_2007,chung_multi-agent_2008,chung_analysis_2012,barkaoui_information-theoretic-based_2014,berger_information_2015,teller_minimizing_2019,barkaoui_evolutionary_2019,mohamed_person_2020,yu_bayesian-based_2020,cheng_scheduling_2021}. 
Searching with swarms \cite{senanayake_search_2016} may include a (possibly heterogeneous) team of searchers and multiple (possibly mobile) targets \cite{wong_multi-vehicle_2005,chung_multi-agent_2008,lo_toward_2012,wettergren_discrete_2014,dell_using_1996,lau_discounted_2008,raap_moving_2019,delavernhe_planning_2021}. We could allow a searcher at point $p$ to search a neighborhood of $p$, with domain-dependent travel costs and having an imperfect searcher with detection probability depending on the distance to $p$.  Progress on this front has considered perfect searchers \cite{dasgupta_aggregation-based_2004,meghjani_multi-target_2016,morin_ant_2023,huynh_et_al:LIPIcs.SWAT.2024.27} or uniform travel costs \cite{sato_path_2010,morin_hybrid_2013,banerjee_multi-agent_2023,collins_probabilistically_2024}.
\longer{With the increased availability of swarm robots \cite{senanayake_search_2016}, we could also consider a (possibly heterogeneous) team of searchers and multiple (possibly mobile) targets \cite{wong_multi-vehicle_2005,chung_multi-agent_2008,lo_toward_2012,wettergren_discrete_2014,dell_using_1996,lau_discounted_2008,raap_moving_2019,delavernhe_planning_2021}. Finally, we discuss our assumption that a searcher can only search the point that they occupy -- this works well when we are planning on a macro level (e.g., a building roof among others). But how do we search for a target in the continuous setting (i.e, how do we search the roof \textit{itself})? A reasonable model is to say that the searcher has a probability (which might depend on the distance to the target) to see the target if it is within a region around it. Progress on this front has considered perfect searchers \cite{dasgupta_aggregation-based_2004,meghjani_multi-target_2016,morin_ant_2023,huynh_et_al:LIPIcs.SWAT.2024.27} or uniform travel costs \cite{sato_path_2010,morin_hybrid_2013,banerjee_multi-agent_2023,collins_probabilistically_2024}; we believe that a more extensive study of imperfect sensing in the continuous setting with travel costs proportional to the distance traveled by the searcher is crucial.}   

\bibliographystyle{IEEEtran}
\bibliography{refs}

\end{document}